\let\csname AND\endcsname\relax % Rendi \AND "libero"
\pgfplotsset{compat=newest}
\newtheorem{thm}{Theorem}
\newtheorem{lem}{Lemma}
\newtheorem{cor}[thm]{Corollary}
\newtheorem{ass}[thm]{Assumption}
\newtheorem{sketch}{Proof Sketch}
\newcommand{\R}{\mathbb{R}}
\newcommand{\N}{\mathbb{N}}
\newcommand{\E}{\mathop{\mathbb{E}}}
\newcommand{\Prob}{\mathbb{P}}
\newcommand{\soc}{\text{soc}}
\newcommand{\bigo}{\mathcal O}
\newcommand{\bigot}{\widetilde{\mathcal O}}
\newcommand{\bm}{\boldsymbol}
\newcommand*{\bdiv}{%
  \nonscript\mskip-\medmuskip\mkern5mu%
  \mathbin{\operator@font div}\penalty900\mkern5mu%
  \nonscript\mskip-\medmuskip
}
\newcommand{\Cs}{\mathcal C}
\newcommand{\Vs}{\mathcal V}
\newcommand{\Xs}{\mathcal X}
\newcommand{\Ys}{\mathcal Y}
\definecolor{C1}{RGB}{255, 127, 14}
\definecolor{C2}{RGB}{0, 153, 136} %vibrantTeal
\definecolor{C3}{RGB}{204, 51, 17} %vibrantRed
\definecolor{C4}{RGB}{170, 51, 119} 
\newcommand{\nn}{\psi_0}
\title{Finite Sample Bounds for Non-Parametric Regression: Optimal Sample Efficiency and Space Complexity}
\author{\name Davide Maran \email davide.maran@polimi.it \\
      \addr Politecnico di Milano\\
      Piazza Leonardo da Vinci, 32-36 - Città Studi, Milano (MI)
      \\
      \\
      \name Marcello Restelli \email marcello.restelli@polimi.it \\
      \addr Politecnico di Milano\\
      Piazza Leonardo da Vinci, 32-36 - Città Studi, Milano (MI)}
\begin{document}

\maketitle

\begin{abstract}
We address the problem of learning an unknown smooth function and its derivatives from noisy pointwise evaluations under the supremum norm. While classical nonparametric regression provides a strong theoretical foundation, traditional kernel-based estimators often incur high computational costs and memory requirements that scale with the sample size, limiting their utility in real-time applications such as reinforcement learning. To overcome these challenges, we propose a parametric approach based on a finite-dimensional representation that achieves minimax-optimal uniform convergence rates. Our method enables lightweight inference without storing all samples in memory. We provide sharp finite-sample bounds under sub-Gaussian noise, derive second-order Bernstein-type guarantees, and prove matching lower bounds, thereby confirming the optimality of our approach in both estimation error and memory efficiency.
\end{abstract}

\section{Introduction and Motivation}

Regression is a fundamental task in machine learning and statistics, and it is perhaps the most classical family of problems to be studied in these fields. While traditional regression techniques, such as linear models, are well-understood and widely used, many real-world applications require learning smooth underlying functions from noisy data. This challenge is omnipresent in Science, particularly in fields such as signal processing, bioinformatics, and financial modeling, where extracting meaningful patterns from noisy observations is essential \citep{bishop2006pattern, lecun2015deep}.
In this paper, we focus on the problem of regression for a function $f:[-1,1]^d\to \R$, which is known to be smooth, i.e., differentiable a given number of times, and aim to obtain an estimate that is uniformly accurate over the entire domain. This kind of guarantee, which is not dataset-related, requires the algorithm to also specify the values $\{x_i\}_{i=1}^n$ for which evaluations of $f(\cdot)$ are needed, a procedure known as \textit{passive design}.
Non-parametric methods, such as kernel regression, Gaussian Processes, and local polynomial estimators, are popular choices for this task due to their flexibility and ability to capture complex data patterns without assuming a specific functional form \citep{williams2006gaussian, hastie2009elements}. However, this flexibility often comes at the cost of computational efficiency and scalability. Non-parametric models typically require storing and processing the entire dataset during inference, making them less practical for large-scale applications or real-time systems \citep{wasserman2006all}.

The limitations outlined above significantly restrict the applicability of classical nonparametric statistical theory to the modern machine learning settings where it would be most valuable. In particular, uniform approximation guarantees are highly desirable in contemporary reinforcement learning and bandit problems, where worst-case control over continuous state or action spaces is essential for stability, policy evaluation, and reliable optimization. In that contest, interest in Sobolev-like assumptions has grown significantly in recent years \citep{liu2021smooth,akhavan2020exploiting,akhavan2024gradient,akhavan2024contextual, maran2023noregret,maran2024local}. The theoretical frameworks developed in these areas typically use parametric predictors and require finite-sample guarantees under sub-Gaussian noise, reflecting the statistical requirements of sequential decision-making. This stands in contrast with much of the nonparametric regression literature, which prioritizes asymptotic optimality and relies on non-parametric methods.

Such discrepancies underscore the need for a paradigm that bridges nonparametric guarantees with parametric efficiency.
Parametric models offer superior scalability and computational efficiency, as their memory footprint remains independent of the dataset size \citep{murphy2012machine}, as well as several other advantages that include interpretability and compactness \citep{bishop2006pattern}. Unfortunately, while parametric methods are naturally suited for $L^2$ (mean-squared-error) guarantees, they often struggle to provide the uniform error control over the entire domain required by modern machine learning.

\subsection{Contributions and paper structure}

In this paper, we propose the first parametric algorithm that achieves minimax-optimal sample complexity for nonparametric regression under sub-Gaussian noise and passive (non-adaptive) design, via a fully finite-sample analysis that yields results holding in high probability. Precisely, our contributions are the following:
\begin{enumerate}
    \item \textbf{Minimax-optimal uniform estimation}. We introduce a parametric estimator that achieves optimal finite-sample rates in the supremum norm for the regression of smooth functions, matching the classical minimax rates of nonparametric regression. Our guarantees hold uniformly over the entire domain and extend to the simultaneous estimation of all derivatives up to the smoothness order:  the estimator for $f^{(\alpha)}(\cdot)$ is the $\alpha-$derivative of $\widehat f_n(\cdot)$. This property, often referred as the \textit{plug-in} estimation \citep{bickel2003nonparametric} is particularly valuable in modern ML \citep{liu2023estimation} since it eliminates any trade-off in the choice of hyperparematers.

    \item \textbf{Finite-sample analysis and second order bounds}.
    We provide high-probability finite-sample bounds under sub-Gaussian noise, making the bias–variance trade-off explicit and avoiding reliance on purely asymptotic arguments. The resulting rates recover the classical optimal scaling while remaining valid for any finite number of samples, with an explicit and optimal dependence on all problem parameters. Beyond standard sub-Gaussian analyses, we derive \textit{Bernstein-type} error bounds that exploit variance information, yielding sharper guarantees when the noise has a large range and small variance. To the best of our knowledge, this is the first second-order finite-sample analysis for uniform estimation in this setting. Second-order bounds have proved to be the backbone of tight sample-complexity guarantees for modern ML in several scenarios \citep{cesa2007improved, wang2024more, pacchiano2024second, olkhovskaya2023first}.

    \item \textbf{Computational and memory complexity}. Unlike classical nonparametric methods, our approach yields a lightweight predictor whose memory and computational costs at prediction time depend only on the number of parameters. We show that this complexity is information-theoretically optimal by proving a matching lower bound on the memory required by any statistically optimal estimator.

    \item \textbf{Numerical validation.} Empirical evaluations on real-world datasets, augmented with synthetic noise, demonstrate that our method achieves error rates comparable to the state of the art while requiring only a fraction of the computational overhead, thereby validating our theoretical findings. 
\end{enumerate}

All the results in the paper are proved under periodic boundary conditions, as is often done when using a Fourier feature map \cite{liu2023estimation}. In the appendix \ref{app:gener_nonper}, we show how to generalize them to non-periodic functions.

\paragraph{Paper structure} The rest of the paper is organized as follows: after introducing the necessary notation in Section \ref{sec:prel}, we establish the foundation of our approach based on the properties of the Fourier Series (see Section \ref{sec:fourier}). The algorithm is derived in Section \ref{sec:algo}, along with its sample complexity guarantee,
which is first presented for sub-Gaussian noise and then as a second-order bound.
A finite-time lower bound is presented in Section \ref{sec:lower}, which matches the previous upper bound in all problem-dependent constants. As anticipated, the only algorithms that achieve optimal sample complexity are nonparametric; we introduce them in Section \ref{sec:related} and compare them with \textsc{DUPA} with respect to computational/space complexity, as well as empirical performance (see Appendix \ref{sec:expe}).

\section{Preliminaries}\label{sec:prel}

In this paper, we focus on a \textbf{regression} problem, i.e., approximating a black-box function $f:\Xs \subset \R^d \to \R$ from noisy samples $y_i$ corresponding to some $x_i\in \Xs$. %The construction of the dataset follows from the following assumption.
\begin{ass}\label{ass:assuno}
    \textit{(Passive design with sub-Gaussian noise)} The agent is able to choose $n$ query points in advance,  receiving a dataset of samples $\{x_i,y_i\}_{i=1}^n$ given by realizations of $Y_i$ such that    $\E[Y_i]=\E[f(x_i)]$ and $Y_i-f(x_i)$ is $\sigma$-sub-Gaussian independent from $Y_j$ for $j\neq i$.
\end{ass}

We study the case where the function $f$ is smooth. Given a multi-index $\bm \alpha$ as a tuple of non-negative integers $(\alpha_1,\dots \alpha_d)$, we say that $f\in \mathcal C^{\nu}(\Xs)$, for $\nu\in (0,+\infty)$, if it is $\nu_*-$times continuously differentiable for $\nu_*:=\lceil \nu - 1\rceil$,
and there exists a constant $L_{\nu}(f)$ such that:
\begin{align}
\forall \bm \alpha:\ \|\bm \alpha\|_1= \nu_*,\qquad \forall x,y\in \Xs \left|D^{(\bm \alpha)} f(x)-D^{(\bm \alpha)} f(y)\right|\le L_{\nu}(f)\|x-y\|_\infty^{\nu-\nu_*}. \label{eq:multider}
\end{align}

The multi-index derivative is defined as
$D^{(\bm \alpha)}f:=\frac{\partial^{ \alpha_1+...+  \alpha_d}}{\partial x_1^{ \alpha_1}\dots \partial x_d^{ \alpha_d}}.$
The previous set becomes a normed space when endowed with a norm $\|f\|_{\mathcal C^{\nu}}$ defined as
$\max \left \{\max_{|\bm \alpha|\le \nu_*}\|D^{(\bm \alpha)}f\|_{L^\infty}, L_{\nu}(f)\right \}.$
Note that, when $\nu\in \N$, this norm reduces to $\|f\|_{\mathcal C^{\nu}}=\max_{|\bm \alpha|\le \nu }\|D^{(\bm \alpha)}f\|_{L^\infty},$
since the Lipschitz constant $L_{\nu}(f)$ of the derivatives up to order $\nu_*=\nu-1$ correspond exactly to the upper bound of the derivatives of order $\nu$ (which exists as a Lipschitz function is differentiable almost everywhere).

\begin{ass}\label{ass:assdue}
    \textit{(Smooth and periodic function)} $f(\cdot)\in \mathcal C_p^\nu([-1,1]^d)$ for some known $\nu>0$ and an upper bound for $\|f\|_{\mathcal C^\nu}$ is known.
\end{ass}

Assumption \ref{ass:assdue} contains, in addition to smoothness, the requirement of periodicity at the boundaries of $[-1,1]^d$. While this assumption may seem restrictive, we argue that it does not make the problem any easier. In fact, we will prove a result showing that the same lower bound for the general case holds for the case with periodicity (theorem \ref{thm:lowerbound}). Moreover, under the assumption that we can query samples on a larger set that contains $\Xs$, we shall prove that our algorithm can be adapted to a general non-periodic function (see Appendix \ref{app:gener_nonper}).
The method introduced in this paper is based on the theory of Fourier Series. This discipline focuses on spaces of functions that are endowed with periodic boundary conditions on the interval.
In the following section, to keep things intuitive, we will only focus on the one-dimensional case, while the statistical complexity bound for arbitrary dimension $d$ will be given in section \ref{sec:multid}.

\paragraph{Periodic functions and Fourier series}

For now, without loss of generality, we keep $\Xs=[-1,1]$. We define trigonometric polynomial of degree $N$ as a function $f$ that can be written as $f(x)=a_0+\sum_{t=1}^N b_t\sin(t\pi x)+a_t\cos(t\pi x),$
for real coefficients $\{a_t,b_t\}$. Obviously, any function of this kind is periodic on $[-1,1]$. By convenience, we can stack the coefficients of this representation into a unique vector $\theta$, instead of having $a,b$. %We define the function $\soc:\mathbb Z\times [-1,1]\to[-1,1]$ as follows $$\soc(t ,x):=\begin{cases}    \sin(t\pi x)& t>0\\    \cos(t\pi x)& t\le 0,\end{cases}$$
Any trigonometric polynomial can be written as 
\begin{equation}f(x)=\sum_{t=-N}^N \theta_t\soc(t,x)=: \phi_N(x)^\top \theta,\label{eq:fourier}\end{equation}
 where $\phi_N(x)$ is the Fourier feature map, which contains terms of the form $\sin(t \pi x)$ or $\cos(t\pi x)$ for $t\le N$.
In the following, we are going to use the definition corresponding to Equation \eqref{eq:fourier} and call the vector space of functions defined in this way as $\mathbb T_N$.
Periodic functions, and trigonometric polynomials in particular, admit a convolution operator that will be useful in the following analyses. In the rest of this paper, we will call, for periodic functions $f,g$ on $[-1,1]$,
$f*g(x):=\int_{-1}^1 f(y)g(x-y)\ dy,$
where $g$ is extended by periodicity: for $z>1$, $g(z):=g(z-2)$ and for $z<-1$, $g(z):=g(z+2)$. This operator satisfies the usual properties of convolution and is usually called \emph{circular convolution}.

\subsection{Regression and Fourier Series}

The underlying idea of the Fourier Series is to use trigonometric polynomials to approximate generic functions. In fact, any function $f(\cdot)$ can be written as a trigonometric polynomial plus an error term
whose magnitude can be bounded, for example, with the following classical result.

\begin{thm}\label{thm:liter}(Theorem 4.1 part (ii) from \cite{schultz1969multivariate})
    There exists an absolute constant $K>0$ such that, for any $f\in \mathcal \mathcal C_p^{\nu}([-1,1]^d)$ we have:
    $$\inf_{T_N\in \mathbb T_N}\|T_N(\cdot)-f(\cdot)\|_{L^\infty}\le KN^{-\nu}\|f\|_{\mathcal C^{\nu}}.$$
\end{thm}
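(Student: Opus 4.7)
The plan is to construct $T_N$ explicitly as the circular convolution $\mathbf K_N * f$ with a non-negative trigonometric-polynomial kernel of degree $N$ (a generalized Jackson kernel), and then bound the error pointwise using moment estimates for $\mathbf K_N$ together with the smoothness of $f$. In one dimension, choose
$$K_N(x) \;=\; c_{N,r}\left(\frac{\sin(\pi M x/2)}{\sin(\pi x/2)}\right)^{2r}, \qquad r=\lceil\nu\rceil,\ M=\lfloor N/r\rfloor,$$
with $c_{N,r}$ normalizing $\int_{-1}^{1}K_N=1$. Then $K_N$ is a non-negative trigonometric polynomial of degree at most $N$, and the elementary bound $|\sin(\pi x/2)|\ge|x|$ on $[-1,1]$ yields the moment estimate $\int_{-1}^{1}K_N(x)|x|^s\,dx \le C_{r,s}\,N^{-s}$ for $0\le s\le 2r-1$. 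In dimension $d$, I use the tensor-product kernel $\mathbf K_N(y):=\prod_{i=1}^d K_N(y_i)$, which remains non-negative, integrates to $1$, and has degree at most $N$ in each variable; hence $T_N:=\mathbf K_N * f \in \mathbb T_N$.

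Using $\int \mathbf K_N=1$ I write
$$T_N(x)-f(x) \;=\; \int_{[-1,1]^d}\mathbf K_N(y)\,\bigl(f(x-y)-f(x)\bigr)\,dy,$$
and expand $f(x-y)$ in its Taylor polynomial around $x$ up to order $\nu_*$, with remainder $R(x,y)$ satisfying $|R(x,y)|\le C\,\|f\|_{\mathcal C^\nu}\|y\|_\infty^{\nu}$ thanks to \eqref{eq:multider}. By the evenness of each factor $K_N(y_i)$, every monomial carrying an odd exponent $\alpha_i$ integrates to zero; the surviving even-index Taylor monomials are killed by replacing $\mathbf K_N$ with a fixed finite linear combination of dilated Jackson kernels that cancels all moments of order $1\le|\bm\alpha|\le 2r-1$ while remaining a trigonometric polynomial of degree $O(N)$ (the standard de la Vall\'ee Poussin--type modification). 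With only $R$ contributing, the bound $\|y\|_\infty^{\nu}\le \sum_{i=1}^d |y_i|^{\nu}$ and the one-dimensional moment estimate give $\int \mathbf K_N(y)\|y\|_\infty^{\nu}\,dy\le C\,d\,N^{-\nu}$, hence $\|T_N-f\|_{L^\infty}\le K\,N^{-\nu}\|f\|_{\mathcal C^\nu}$ with $K$ depending only on $\nu$ and $d$.

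The main obstacle is the moment-killing step for $\nu>2$: the raw Jackson kernel has non-vanishing even moments that would limit the rate to $O(N^{-2})$, and one must cancel them without destroying the polynomial-degree bound or blowing up the $L^1$-norm of the kernel. In the elementary range $\nu\le 1$ this step is absent, since the H\"older condition from \eqref{eq:multider} directly yields $|f(x-y)-f(x)|\le \|f\|_{\mathcal C^\nu}\|y\|_\infty^{\nu}$ and the remainder argument alone suffices. For general $\nu$, the modification is the classical technical core of Jackson-type inequalities; once carried out, all constants are independent of $N$, producing an absolute constant $K=K(\nu,d)$ as claimed.
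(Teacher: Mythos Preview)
The paper does not prove this statement at all: it is quoted verbatim as Theorem~4.1(ii) of \cite{schultz1969multivariate}, with no argument supplied. So there is no proof in the paper to compare against; your sketch is an attempt to furnish what the paper simply cites.

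Your overall strategy --- convolve with a high-power Jackson kernel, Taylor-expand, and control the remainder via moment bounds --- is the classical route to Jackson-type inequalities and is in the same spirit as the cited literature. For $0<\nu\le 2$ your argument is essentially complete: odd Taylor terms vanish by the evenness of $K_N$, and the remainder bound $|R(x,y)|\lesssim \|f\|_{\mathcal C^\nu}\|y\|_\infty^{\nu}$ together with $\int K_N(t)|t|^{\nu}\,dt\lesssim N^{-\nu}$ gives the result.

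For $\nu>2$, however, you correctly flag the obstacle but do not resolve it. Your proposed fix (``a fixed finite linear combination of dilated Jackson kernels that cancels all moments of order $1\le|\bm\alpha|\le 2r-1$'') is too vague to count as a proof. Dilating $K_N$ changes its degree, so one must argue that the combination stays of degree $O(N)$; and once the kernel becomes signed, the tensor-product moment estimate must control $\int |\widetilde K_N(y_i)|\,|y_i|^{s}\,dy_i$ and $\int|\widetilde K_N|$ rather than the unsigned integrals, and you have not shown these remain uniformly bounded in $N$. These are precisely the technicalities that make the higher-order Jackson theorem nontrivial; you have named them but not carried them out, so as written the proposal is a correct outline with the hard step left open.

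One minor point: the tensor kernel $\prod_{i=1}^d K_N(y_i)$ has degree $N$ in each coordinate, hence total degree up to $dN$. If $\mathbb T_N$ in dimension $d$ is taken with a total-degree constraint (as the paper's $N_d=\binom{2N+d}{d}$ suggests), you should use $M=\lfloor N/(dr)\rfloor$ in each factor; this only affects the constant $K=K(\nu,d)$.
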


Note that the trigonometric polynomial realizing the infimum of the previous definition exists but does \textit{not} correspond to the Fourier projection on $\mathbb{T}_N$, which instead is defined by minimizing the $L^2-$norm. For the latter notion of distance, results analogous to the one presented in theorem \ref{thm:liter} are known, which always link the smoothness of a function to the approximation rate. 

In the context of Learning theory \citep{vapnik2013nature}, and specifically Regression \citep{harrell2001regression}, these kinds of results have inspired a simple yet effective idea. If the target function $f(\cdot)$ is smooth, and we can access it through noisy samples, we can choose $N$ so that the error term is negligible and then run the simple Linear Regression algorithm \citep{montgomery2021introduction} to estimate coefficients $\widehat \theta_n$ such that $ \phi_N(\cdot)^\top \widehat \theta_n  \approx f(\cdot)$.
In order to get a satisfactory result, we need two things to be high: 1) $N$, so that the approximation error becomes negligible (low bias); 2) $n$, the number of samples, so that the variance is also low. In fact, the two needs cannot be disentangled: the higher $N$, the more samples are needed to avoid overfitting. 

\subsubsection{The problem of linear regression under misspecification}\label{sec:misspec}

Formalizing our setting, we have to fit a function $f(\cdot)=\phi_N^\top(\cdot) \theta  + \xi_N[f](\cdot)$ from noisy observation, knowing that $\|\xi_N[f](\cdot)\|_{L^\infty}\le \xi_N^\infty \to 0$ with $N\to \infty$, at a certain known rate. In case we are interested in minimizing the $L^2$ loss, things go smooth, as from  it follows that, sampling uniformly on $[-1,1]$ we can guarantee \cite{wainwright2019high} (Chapters 13 and 14) $\|f(\cdot)-\langle \phi_N(\cdot), \widehat \theta_n \rangle\|_{L^2}$
\begin{align*}
    & \propto \sqrt{\E_{X\sim \text{Unif}(-1,1)}[(f(X)-\langle \phi_N(X), \widehat \theta_n \rangle)^2|\widehat \theta_n]}=\bigo\left(\sqrt{N/n}+{\xi_N^\infty}\right).
\end{align*}
In light of this result, optimal error bounds for Fourier regression in the case of the $L^2$ were proved \cite{tsybakov2009nonparametric} (1.7 Projection estimators). The pain starts when focusing on the uniform error instead. In fact, \cite{lattimore2020learning} proved that for general feature maps of length $N$, the best provable guarantee takes the following form
$$\|f(\cdot)-\langle \phi_N(\cdot), \widehat \theta_n \rangle\|_{L^\infty} =\bigo\left(\sqrt{N/n}+\textcolor{C3}{\sqrt N}\xi_N^\infty\right).$$
This dependence on $\sqrt N$, coupled with the misspecification, is very unfortunate in our case. Not only can it be shown to be suboptimal, but it may also lead to vacuous statistical complexity bounds \citep{maran2023noregret}. As we anticipated, the whole idea of using the Fourier Series in Regression builds on the fact that the approximation term $\xi_N^\infty$ vanishes for $N\to \infty$, a property that may not hold for the product $\sqrt N\xi_N^\infty$.
Two more things are particularly discouraging: 1) the result was proved for the \textit{bandit} setting, where the learner can adaptively choose the queries $x_i$ based on past observations $y_i$, which is easier than our passive design assumption; 2) we are interested in a uniform bound also for the derivatives of the function $f$, and the misspecification is known to be amplified for this kind of estimation problems. To elude this result, we need some argument that does not come from linear algebra - as anything that holds for generic feature maps is doomed by the previous negative result - but something that exploits the specific features of the Fourier basis.

\section{Approximation via Convolution Kernels}\label{sec:fourier}

\begin{figure*}[t]  
\centering 
\begin{tikzpicture} [scale=.9]    
\begin{axis}[
    grid = both,
    legend style={at={(0.15,0.95)}, font=\footnotesize,anchor=north},
]
\addlegendentry{$N=10$}
\addplot[domain=-1:1,samples=500,smooth,C4, line width=2pt] {sin((10.5)*180*x)/sin(90*x)};
\end{axis} 
\end{tikzpicture} 
\begin{tikzpicture} [scale=.9]    
\begin{axis}[
    grid = both,
    legend style={at={(0.15,0.95)}, font=\footnotesize,anchor=north},
]
\addlegendentry{$N=10$}
\addplot[domain=-1:1,samples=500,smooth,C4, line width=2pt] {(sin(180*(8*x))*sin(180*(3*x))/(12*sin(180*(x/2))^2)};
\end{axis} 
\end{tikzpicture}
\caption{Dirichlet Kernel (left) and de
la Vall\'ee-Poussin one (right) \citep{maran2024projection}.} \label{fig:M}  
\end{figure*}
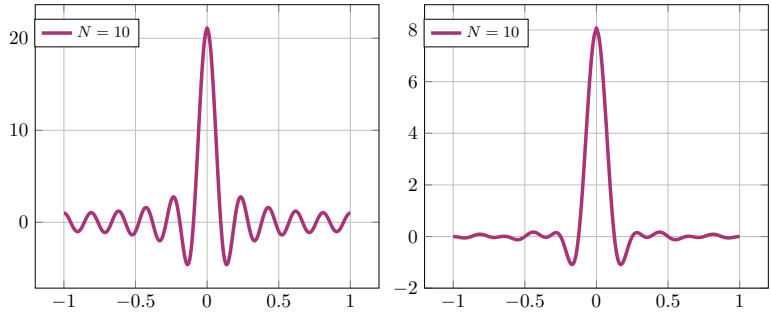  

The most standard way to find the Fourier coefficients of a given function $f\in L^2([-1,1])$ is by performing a scalar product with the base functions $\sin(t\pi \cdot)$ and $\cos(t\pi \cdot)$. This procedure gives the value of each of the coefficients of the trigonometric polynomial in equation \ref{eq:fourier}. Still, there is another way to perform the same operation, which directly gives the Fourier series truncated at a given order $N$; this passes through a function called \textit{Dirichlet Kernel}, defined as
$$D_N(x):=1+2\sum_{t=1}^N \cos(t\pi x)=\frac{\sin((N+1/2)\pi x)}{\sin(\pi x/2)}.$$
In fact, calling $\mathcal F_N[f](\cdot)$ the Fourier Series of $f$ truncated to the term $N$, it was proved that $\mathcal F_N[f](\cdot)=D_N*f(\cdot).$
As we shall see, expressing this operator in terms of convolution is fundamental to what follows. However, first note that finding the Fourier series is not exactly what we are looking for. In fact, as we want a method that achieves a performance in infinity norm, we are interested in finding the trigonometric polynomial $T_N\in \mathbb T_N$ minimizing $\|T_N(\cdot)-f(\cdot)\|_{L^\infty}$. Instead, by using Fourier Series, we minimize $\|T_N(\cdot)-f(\cdot)\|_{L^2}$. Unfortunately, the difference between these two objectives is substantial: we cannot replace one with the other without significantly weakening the guarantees. Luckily, a solution can be found by replacing $D_N$ with another kernel, defined as
$$V_N(x):=\frac{1}{N/2+1}\sum_{t=N/2}^{N}D_t(x).$$
This function, known as the De la Valée Poussin kernel \citep{de1918meilleure, de1919leccons}, possesses important properties. Even if, contrary to its $L^2$ counterpart, it is not able to provide with $V_N*f(\cdot)$ the exact trigonometric polynomial in $\mathbb T_N$ which minimizes the $L^\infty$ error, it comes close. In fact, the result of the convolution operator is just a constant time worse than the optimal projection in $\|\cdot\|_{L^\infty}$, as the next theorem states. 

\begin{restatable}{thm}{vecio}\label{thm:vecio}
    Let $f\in \mathcal C_p^{\nu}([-1,1])$ and $\alpha \in \{0,1,\dots \nu_*\}$. Then, $V_N*f(\cdot)\in \mathbb T_{N}$, and we have for a constant $K>0$,
    $$\|f^{(\alpha)}(\cdot)-(V_N*f)^{(\alpha)}(\cdot)\|_{L^\infty} \le K(2/N)^{\nu-\alpha}\|f\|_{\mathcal C^{\nu}}.$$
\end{restatable}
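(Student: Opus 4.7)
The plan is to reduce the statement to Theorem~\ref{thm:liter}, applied not to $f$ itself but to its derivative $f^{(\alpha)}$, by exploiting two structural properties of the de la Vallée Poussin kernel: a reproducing property on trigonometric polynomials of degree at most $N/2$, and a uniform bound on its $L^1$ norm.

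First, I would verify that differentiation commutes with circular convolution against $V_N$, in the sense that for $\alpha \le \nu_*$,
$$(V_N * f)^{(\alpha)}(x) = (V_N * f^{(\alpha)})(x).$$
Since $V_N \in \mathbb{T}_N$ is $C^\infty$, the derivative can be moved onto $V_N$ by differentiation under the integral; to then shift it onto $f^{(\alpha)}$ one uses the change of variable $y \mapsto x - y$ together with $\alpha$ integrations by parts, whose boundary terms vanish by periodicity of both factors. Writing $g := f^{(\alpha)}$, the definition of $\|\cdot\|_{\mathcal{C}^\nu}$ gives $g \in \mathcal{C}_p^{\nu - \alpha}$ with $\|g\|_{\mathcal{C}^{\nu - \alpha}} \le \|f\|_{\mathcal{C}^\nu}$.

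Second, I would establish two further ingredients: the reproducing property $V_N * T = T$ for all $T \in \mathbb{T}_{N/2}$, which follows immediately from the definition $V_N = (N/2 + 1)^{-1} \sum_{t = N/2}^N D_t$ together with the standard identity $D_t * T = T$ whenever $T \in \mathbb{T}_t$; and the classical Lebesgue constant bound $\|V_N\|_{L^1([-1,1])} \le C_0$ for a universal $C_0$, which is precisely the property that sets $V_N$ apart from $D_N$ (whose Lebesgue constant grows as $\log N$). Then, applying Theorem~\ref{thm:liter} to $g$ produces $S \in \mathbb{T}_{N/2}$ with $\|g - S\|_\infty \le K'(2/N)^{\nu - \alpha} \|f\|_{\mathcal{C}^\nu}$, and since $V_N * S = S$,
$$\|g - V_N * g\|_\infty = \|(g - S) - V_N * (g - S)\|_\infty \le (1 + \|V_N\|_{L^1})\,\|g - S\|_\infty,$$
by Young's convolution inequality. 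Combining this with the commutation identity from Step 1 and absorbing $(1 + C_0)K'$ into a new constant $K$ closes the argument.

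The main difficulty is essentially bookkeeping: all three ingredients are classical in Fourier analysis, so the real work is assembling them correctly and tracking the dependence on $N$ and on $\|f\|_{\mathcal{C}^\nu}$. The one genuinely delicate point is justifying the commutation $(V_N * f)^{(\alpha)} = V_N * f^{(\alpha)}$ in the boundary case $\nu - \alpha \in (0, 1)$, where $f^{(\alpha)}$ is merely Hölder and not classically differentiable; this is handled by performing all $\alpha$ integrations by parts up front, so that only continuity of $f^{(\alpha)}$ — not any further differentiability — is needed for the final identity.
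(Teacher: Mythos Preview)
Your proposal is correct and follows essentially the same approach as the paper. The paper's proof cites an external result (Theorem~\ref{thm:maran}) for the $\alpha=0$ case --- which is precisely the statement you prove inline via the reproducing property of $V_N$ on $\mathbb{T}_{N/2}$, the $L^1$ bound~\eqref{eq:norm1}, and Theorem~\ref{thm:liter} --- and then reduces the general $\alpha$ to that case exactly as you do, via commutation of differentiation with convolution and the inequality $\|f^{(\alpha)}\|_{\mathcal{C}^{\nu-\alpha}} \le \|f\|_{\mathcal{C}^\nu}$.
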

Comparing this result to the one of theorem \ref{thm:liter}, we can see that the convolution $V_N*f$ not only approximates the function $f$ but also all its derivatives, each one with optimal order. The proof, which is left to appendix \ref{app:fourier}, is based on a simple property, which is also going to be useful later,
\begin{equation}\|V_N(\cdot)\|_{L^1}\le \Lambda_1,\label{eq:norm1}\end{equation}
for an absolute constant $\Lambda_1$, that, crucially, does not depend on $N$, and has recently proved to be $\frac{1}{3}+\frac{2\sqrt 3}{\pi}\approx 1.43$ \citep{mehta2015l1}. Thanks to this equation, we can split $V_N$ into a positive and a negative part, both with finite integrals. We will call these two parts $V_N^+(\cdot),V_N^-(\cdot)\ge 0$ as
\begin{equation}V_N(\cdot) =: \beta_+V_N^+(\cdot)- \beta_-V_N^-(\cdot)\label{eq:deco}\qquad
\int_{-1}^1V_N^+(x)\ dx 
=\int_{-1}^1V_N^-(x)\ dx=1.\end{equation}

\begin{algorithm}[t]
\caption{DUPA: Derivative-Uniform Parametric Approximation}
\label{alg:zero}
\begin{algorithmic}[1]
\REQUIRE Query space $[-1,1]$, max number of samples $n$, feature map length $N$, smoothness order $\nu > 0$
\ENSURE Estimated function $\widehat f_n$ and its derivatives $\widehat f_n^{(1)}, \dots, \widehat f_n^{(\nu_\star)}$

\STATE $\varepsilon \leftarrow 1/(8\pi N)$
\STATE Find $\varepsilon$-cover $\mathcal{C}_\varepsilon$ of $[-1,1]$
\STATE Let $\mathcal{X}_\phi := \{ \phi_N(x): x \in \mathcal{C}_\varepsilon \}$\label{algline:x}
\STATE Compute quasi-optimal design $\rho$ for the set $\mathcal{X}_\phi$\label{algline:rho}
\STATE Compute $V_N^+, V_N^-, \beta_+, \beta_-$ from Equation~\eqref{eq:deco}\label{algline:part2}
\STATE $n_{\text{tot}} \leftarrow \lfloor n/4 \rfloor$
\STATE $i \leftarrow 0$

\FORALL{$x \in \text{supp}(\rho)$}
    \FOR{$j = 1$ to $\lceil n_{\text{tot}} \cdot \rho(x) \rceil$}
        \STATE $x_i \leftarrow x$
        
        \STATE Sample \textcolor{C2}{$\eta_+\sim V_N^+(\cdot)$}
        
        \STATE Sample \textcolor{C3}{$\eta_-\sim V_N^-(\cdot)$}\;
        
        \STATE Query $y_i^+$ at \textcolor{C2}{$x+\eta_+$}\label{algline:query1}
        
        \STATE Query $y_i^-$ at \textcolor{C3}{$x+\eta_-$}\label{algline:query2}
        
        \STATE $y_i \leftarrow \textcolor{C2}{\beta_+y_i^+}-\textcolor{C3}{\beta_-y_i^-}$
        
        \STATE $i \leftarrow i + 1$
    \ENDFOR
\ENDFOR

\STATE Solve least squares: $\widehat{\theta}_n = \arg\min_{\theta \in \mathbb{R}^N} \sum_{i=1} (\phi_N(x_i)^\top \theta - y_i)^2$\label{algline:ls}
\RETURN $\widehat{f}_n(\cdot) = \phi_N(\cdot)^\top \widehat{\theta}_n$ and its derivatives
\end{algorithmic}
\end{algorithm}

\subsection{The Perturbation Trick}\label{sec:fooling}

For now, we have seen that the abstract idea of projecting, w.r.t. the infinity norm, a function onto the vector space $\mathbb T_N$ can be performed through a convolution with $V_N(\cdot)$. A key advantage of this approach is that we can approximate all derivatives of a function at the same time: taking the derivative of the convolution of a function $f(\cdot)$ with kernel $V_N(\cdot)$ is the same as making the convolution of the derivative of $f(\cdot)$ with $V_N(\cdot)$. Therefore, by approximating the function itself, we implicitly approximate all of its derivatives.
However, the real advantage of this approach lies in the possibility of active sampling. As we have seen in section \ref{sec:misspec}, problems arise when performing linear regression w.r.t. the basis $\phi_N(\cdot)$ of a function that is not written exactly as $\phi_N(\cdot)^\top \theta$. While our target function $f(\cdot)$ cannot be written in this form, this holds for $V_N*f(\cdot)$, as Theorem \ref{thm:vecio} ensures. Here is the crucial point: while we are interested in approximating $f(\cdot)$, gathering samples from $V_N*f(\cdot)$ is better, as it approximates $f(\cdot)$ at the optimal order and is perfectly linear in $\phi_N(\cdot)$. Unfortunately, the agent-environment interaction does not allow sampling from the latter function. But this is when active sampling comes into action.

Recall that, when perturbing the argument of a function $g(\cdot)$ with a noise $\eta$ of density function $f_\eta(\cdot)$, we have $\E[g(\cdot+\eta)]=g*f_\eta(\cdot)$. If we apply the previous principle with a noise of density $\eta \sim V_N(\cdot)$, we get $\E[g(\cdot+\eta)]=g*V_N(\cdot)$. Unfortunately, $V_N(\cdot)$ is \textit{not} a density: it is not positive. But luckily, using the decomposition \eqref{eq:deco} does the job:
\begin{equation}
    \eta_+\sim V_N^+(\cdot) \eta_-\sim V_N^-(\cdot)
    \qquad\beta_+\E[g(\cdot+\eta_+)]-\beta_-\E[g(\cdot+\eta_-)]=g*V_N(\cdot).\label{eq:crucial}
\end{equation}

The last equation \ref{eq:crucial} is what we are going to use to fool our own learner. The idea of our algorithm is to start from a linear learner which queries data $\bar x_1\dots \bar x_{n'}$ from a distribution $\rho$.  Instead of querying the model for the point chosen by the learner, we ask for points $x_1\dots x_{2n'}$ by perturbing the original points as in equation \ref{eq:crucial}. After receiving the outputs $y_1\dots  y_{n'}$, the linear learner will act as if the target function were $V_N*f(\cdot)$, which is linear without any misspecification.

In the next section, we formalize this idea as Algorithm \ref{alg:zero} and prove its theoretical guarantees. This trick, known as \textit{projection by convolution}, was recently used by \cite{maran2024projection} in reinforcement learning. %Note that, both active sampling (which enables step 2) and the properties of Fourier series (without which step 3 is pointless) have been of paramount importance.

\section{The DUPA Algorithm: Design and Bounds}\label{sec:algo}

Algorithm \ref{alg:zero}, \textsc{DUPA}, takes as input all the problem parameters plus the order $N$ of the feature map we are going to use, even if, as we shall see, specific values of $N$ are required to get theoretical guarantees. The first lines are standard; we start finding an $\varepsilon$ cover of the interval $[-1,1]$ and applying the feature map to each point. In line \ref{algline:rho}, the linear learner chooses which points they desire to query the black-box model. In this step, we employ the notion of quasi-optimal design for the least square problem, which we explained in detail in appendix \ref{app:optimaldesign}. Keeping things simple, using a quasi-optimal design means finding the best distribution of data to fit linear regression if we are interested in the supremum error on a finite set of points, $\mathcal C_\varepsilon$ in our case. Using a quasi-optimal design reduces the number of queries by a factor of $\sqrt N$ while incurring only $\log(|\mathcal C_\varepsilon|)$ queries, and is necessary to achieve the optimal sample complexity. The noise densities follow exactly equation \eqref{eq:deco}, and the sampling process, at lines \ref{algline:query1},\ref{algline:query2} follows exactly equation \eqref{eq:crucial}.
After these steps, the algorithm proceeds exactly as a simple linear regression. Thanks to this trick, algorithm \ref{alg:zero} achieves an optimal sample complexity guarantee for all the derivatives of the target function $f$, as the next theorem states.

\begin{restatable}{thm}{main}\label{thm:main} Run \textsc{DUPA} algorithm \ref{alg:zero} for a choice $N$ such that $16N\log \log(N)<n$. Under assumptions \ref{ass:assuno},\ref{ass:assdue} for $d=1$, with probability at least $1-\delta$, the output satisfies, for all $\alpha = \{0,\dots \nu_*\}$
\begin{equation}
    \|f^{(\alpha)}(\cdot)-\phi_N^{(\alpha)}(\cdot)^\top \widehat \theta_n\|_{L^\infty} \lesssim N^\alpha\left(\frac{\|f\|_{\mathcal C^{\nu}}}{N^\nu}+\sqrt{\frac{N\log(n/\delta)}{n}}\sigma\right).
\label{eq:mainthm}\end{equation}
\end{restatable}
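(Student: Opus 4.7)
\textbf{Proof plan for Theorem \ref{thm:main}.}

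The plan is to reduce the analysis to a \emph{well-specified} linear regression problem whose target is $V_N*f$ rather than $f$, and then pay the approximation error from Theorem \ref{thm:vecio} separately. By the definition of the queries (lines \ref{algline:query1}--\ref{algline:query2}) and Equation \eqref{eq:crucial}, the synthetic response $y_i = \beta_+ y_i^+ - \beta_- y_i^-$ satisfies $\E[y_i\mid x_i] = (V_N*f)(x_i)$; and since $V_N*f \in \mathbb T_N$ by Theorem \ref{thm:vecio}, there exists $\theta^\star \in \R^{2N+1}$ with $(V_N*f)(x) = \phi_N(x)^\top \theta^\star$. Thus $y_i = \phi_N(x_i)^\top\theta^\star + \xi_i$ with $\xi_i$ centered; using Assumption \ref{ass:assuno}, $\|V_N\|_{L^1}\le \Lambda_1$ (Equation \eqref{eq:norm1}), and $\|f\|_\infty\le \|f\|_{\mathcal C^\nu}$, one checks that $\xi_i$ is sub-Gaussian with parameter at most $\Lambda_1\sqrt{\sigma^2 + \|f\|_{\mathcal C^\nu}^2}$, independent across $i$.

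Next, I would apply the standard $L^\infty$ guarantee for least-squares with a (quasi-)$G$-optimal design on the finite set $\mathcal X_\phi$ (Appendix \ref{app:optimaldesign}). Writing $T(x) := \phi_N(x)^\top(\widehat\theta_n - \theta^\star)$, the Kiefer--Wolfowitz-type property of $\rho$ yields $\max_{x\in\mathcal C_\varepsilon}|T(x)| = \widetilde{\mathcal O}\bigl(\sqrt{N/n}\,\sigma_{\mathrm{eff}}\,\sqrt{\log(|\mathcal C_\varepsilon|/\delta)}\bigr)$ with probability $1-\delta$, where $|\mathcal C_\varepsilon| = \mathcal O(N)$ so the log factor is merely $\log(N/\delta)$; the condition $16N\log\log N < n$ ensures the design-matrix concentration inequality actually applies (the exact threshold comes from the Tropp-type matrix Chernoff bound needed for the empirical $\rho$). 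To upgrade from the cover to all of $[-1,1]$, I would use the Bernstein inequality for trigonometric polynomials: since $T\in\mathbb T_N$, one has $\|T'\|_{L^\infty}\le N\pi\,\|T\|_{L^\infty}$. Combined with $\varepsilon = 1/(8\pi N)$, a one-step interpolation gives $\|T\|_{L^\infty} \le \max_{\mathcal C_\varepsilon}|T| + N\pi\,\varepsilon\,\|T\|_{L^\infty}$, hence $\|T\|_{L^\infty}\le \tfrac{8}{7}\max_{\mathcal C_\varepsilon}|T|$.

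Finally, I would combine the estimation and approximation errors and push them through derivatives. For any $\alpha\le \nu_*$,
\begin{align*}
\|f^{(\alpha)} - \phi_N^{(\alpha)\top}\widehat\theta_n\|_{L^\infty}
&\le \|f^{(\alpha)} - (V_N*f)^{(\alpha)}\|_{L^\infty} + \|T^{(\alpha)}\|_{L^\infty}\\
&\le K(2/N)^{\nu-\alpha}\|f\|_{\mathcal C^\nu} + (N\pi)^\alpha\,\|T\|_{L^\infty},
\end{align*}
where the first term uses Theorem \ref{thm:vecio} and the second uses the iterated Bernstein inequality $\|T^{(\alpha)}\|_{L^\infty}\le (N\pi)^\alpha\|T\|_{L^\infty}$. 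Plugging the $\widetilde{\mathcal O}(\sqrt{N/n})$ bound on $\|T\|_{L^\infty}$ and factoring out $N^\alpha$ yields exactly the claim \eqref{eq:mainthm}.

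\textbf{The main obstacle} will be the $L^\infty$ analysis of the least-squares step: naive Cauchy--Schwarz through $\|\widehat\theta_n - \theta^\star\|_2$ loses a factor $\sqrt N$ and would produce the suboptimal rate of Section \ref{sec:misspec}. Avoiding this requires two ingredients working together, namely the quasi-optimal design (to control $\max_{x\in\mathcal C_\varepsilon}|T(x)|$ directly with the right $\sqrt{N/n}$ scaling) and Bernstein's inequality for trigonometric polynomials (to lift the cover bound to the full $L^\infty$ norm and to transfer it to every derivative without incurring an extra $\sqrt N$). Verifying that the residual noise $\xi_i$ retains a sub-Gaussian constant independent of $N$ (despite $V_N^\pm$ and $\beta_\pm$ being $N$-dependent) is the other subtle point and is precisely where the uniform bound \eqref{eq:norm1} on $\|V_N\|_{L^1}$ is indispensable.
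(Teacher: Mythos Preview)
Your plan is correct and tracks the paper's proof step for step: reduce to well-specified least squares targeting $V_N*f$, bound $|\phi_N(x)^\top(\widehat\theta_n-\theta^\star)|$ on $\mathcal C_\varepsilon$ via the quasi-optimal design, lift to all of $[-1,1]$ and to derivatives via the Bernstein-type inequality for $\mathbb T_N$, and finally add the bias from Theorem~\ref{thm:vecio}.

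Two small deviations worth flagging. First, your sub-Gaussian parameter $\Lambda_1\sqrt{\sigma^2+\|f\|_{\mathcal C^\nu}^2}$ is actually the more careful accounting: the random perturbation $\eta_\pm$ makes $f(x_i+\eta_\pm)-\E[f(x_i+\eta_\pm)]$ a bounded (hence sub-Gaussian) contribution on top of the observation noise, a point the paper's Part~1 elides when it replaces $\E[y_i^+]$ by $f(x_i+\eta_+)$ inside the conditional expectation. Second, your reading of the hypothesis $16N\log\log N<n$ is off: it is \emph{not} a threshold for a matrix-Chernoff concentration step. The design here is deterministic---each $x\in\text{supp}(\rho)$ is queried exactly $\lceil n_{\text{tot}}\rho(x)\rceil$ times---so $\Sigma_n\succeq n_{\text{tot}}\,\E_{x\sim\rho}[\phi_N(x)\phi_N(x)^\top]$ holds exactly and Theorem~\ref{thm:KW} gives $\|\phi_N(x)\|_{\Sigma_n^{-1}}^2\le 2(2N+1)/n_{\text{tot}}$ with no randomness. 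The condition is purely a sample-budget constraint: since $|\text{supp}(\rho)|\le 4N\log\log N$ and each design point costs two queries (one for $\eta_+$ and one for $\eta_-$) plus a ceiling, the total number of queries is at most $2n_{\text{tot}}+8N\log\log N$, and the hypothesis ensures this does not exceed $n$.
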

\begin{sketch}
    %The error bound, splits into two terms, $N^\alpha\frac{\|f\|_{\mathcal C^{\nu}}}{N^\nu}$ being related with how well the $(V_N*f)^{(\alpha)}$ approximates $f^{(\alpha)}$, the other $\frac{N^{1/2}}{\sqrt n}\sigma$ with the approximation of $(V_N*f)^{(\alpha)}$ with random samples. 
    The proof begins by formalizing the idea of sampling $\eta_+,\eta_-$ that we heuristically introduced in Section \ref{sec:fooling}. This makes the Lebesgue constant of the DVP kernel emerge, which is bounded. Then, the total error is partitioned into bias and variance terms, which are treated separately. The demonstration is then reduced to three parts, that ensapsulate the key steps of the theory developed so far. 
    The first is to ensure that the discretization over the $\mathcal C_\varepsilon$ points causes no harm. This is not trivial due to the requirement of uniform approximation, but can be solved thanks to the particular properties of trigonometric polynomials. The second is to use optimal design theory to bound the error due to sample stochasticity. The last one bounds the bias of approximation by a trigonometric polynomial, generalizes Theorem \ref{thm:liter}, and puts everything together.
\end{sketch}

The former result is useful for explicitly showing the bias-variance trade-off. The bias term decays as $N^{\alpha-\nu}$, the typical rate granted by Jackson's theorems. The variance decays as $\sqrt{N\log(N/\delta)/n}$, which is the standard rate for linear regression with an optimal design. The bound is completed by explicitly choosing the feature length $N$.

% From the previous result, a corollary immediately follows, which proves the optimal order of approximation in terms of the number of samples $n$ by fixing $N$.

\begin{cor}\label{cor:realmain}
    Run \textsc{DUPA} algorithm \ref{alg:zero} for
    $N=\left(\frac{n}{\log (n/\delta)}\right)^{\frac{1}{2\nu+1}}\|f\|_{\mathcal C^{\nu}}^{\frac{2}{2\nu+1}}\sigma^{-\frac{2}{2\nu+1}}.$ 
    Under assumptions \ref{ass:assuno},\ref{ass:assdue} for $d=1$, we have with probability at least $1-\delta$, for all $\alpha = \{0,\dots \nu_*\}$
    \begin{equation}
        \|f^{(\alpha)}(\cdot)-\phi_N^{(\alpha)}(\cdot)^\top \widehat \theta_n\|_{L^\infty} \lesssim \left(\frac{n}{\log (n/\delta)}\right)^{-\frac{\nu-\alpha}{2\nu+1}}\|f\|_{\mathcal C^{\nu}}^{\frac{2\alpha+1}{2\nu+1}}\sigma^{\frac{2\nu-2\alpha}{2\nu+1}}.
    \label{eq:maincor}\end{equation}    
\end{cor}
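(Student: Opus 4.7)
The corollary is a routine calibration of the free parameter $N$ in Theorem \ref{thm:main}, so the plan is to treat it as a bias–variance balancing computation rather than a new theorem.

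First I would verify the admissibility condition $16 N \log\log(N) < n$ required to invoke Theorem \ref{thm:main}. The stated choice $N = \bigo\bigl(n^{1/(2\nu+1)}\|f\|_{\mathcal C^{\nu}}^{2/(2\nu+1)}\sigma^{-2/(2\nu+1)}\bigr)$ grows strictly more slowly than $n$ for every $\nu > 0$, so the hypothesis holds for $n$ sufficiently large. The integrality of $N$ and the log-log admissibility slack are both absorbed into the $\bigot$ notation.

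Once the theorem applies, I would separate the right-hand side of \eqref{eq:mainthm} into the bias term $N^{\alpha-\nu}\|f\|_{\mathcal C^{\nu}}$ and the variance term $N^{\alpha+1/2}\sigma/\sqrt{n}$. These two terms have opposite monotonicity in $N$, so the optimum is obtained by equating them, which yields
\begin{equation*}
N^{\nu+1/2} \;\asymp\; \frac{\sqrt{n}\,\|f\|_{\mathcal C^{\nu}}}{\sigma},
\qquad
\text{i.e.}\qquad
N \;\asymp\; n^{\frac{1}{2\nu+1}}\,\|f\|_{\mathcal C^{\nu}}^{\frac{2}{2\nu+1}}\,\sigma^{-\frac{2}{2\nu+1}},
\end{equation*}
which is precisely the prescribed value. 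Note that the balancing identity is independent of $\alpha$: the same $N$ is optimal for all derivative orders simultaneously, a welcome consequence of the fact that raising the derivative order multiplies both terms by the same factor $N^{\alpha}$.

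Finally I would substitute this $N$ into either of the two balanced terms and track the exponents. The power of $n$ becomes $(\alpha+1/2)/(2\nu+1) - 1/2 = -(\nu-\alpha)/(2\nu+1)$; the power of $\|f\|_{\mathcal C^{\nu}}$ becomes $(2\alpha+1)/(2\nu+1)$; the power of $\sigma$ becomes $-(2\alpha+1)/(2\nu+1) + 1 = (2\nu-2\alpha)/(2\nu+1)$. Collecting these exponents reproduces \eqref{eq:maincor}. I do not expect any real obstacle: the entire content of the corollary is the observation that Theorem \ref{thm:main}, although stated for an arbitrary admissible $N$, already contains the optimal rate once $N$ is tuned to make bias and variance of the same order, with the mild bookkeeping of ensuring the log-log admissibility condition is met.
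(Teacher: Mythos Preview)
Your proposal is correct and follows essentially the same approach as the paper: the paper's own argument (stated for the $d$-dimensional analog, Theorem~\ref{thm:realmaind}) consists entirely of substituting the prescribed $N$ into the preceding theorem and noting that the admissibility condition is automatically satisfied because $N$ grows sublinearly in $n$. Your write-up is in fact more explicit than the paper's, since you spell out the bias--variance balancing that motivates the choice of $N$ and track each exponent separately.
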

The proof follows by substituting the given value of $N$ and noting that $\log(N)\asymp \log(n/\log n)\asymp \log(n)$. This rate matches the asymptotic rate of \cite{stone1982optimal}, which is optimal for nonparametric regression.

\paragraph{The importance of selecting the De la Vallée Poussin kernel} In the previous section, we emphasized the critical importance of employing the De la Vallée Poussin kernel rather than the simpler and more intuitive Dirichlet kernel. Indeed, \textsc{DUPA} remains functional in principle when implemented with the Dirichlet kernel; one would only need to modify line \ref{algline:query1} and the subsequent one. However, such an implementation fails to achieve optimal sample complexity. By derivation, it can be shown that the resulting upper bound incurs a multiplicative factor determined by the Lebesgue constant (i.e., the $L^1$ norm of the kernel), which for the Dirichlet kernel scales as $\log(N)$. This factor multiplies the $\log(n)$ term in the upper bound, thereby compromising its optimality.

\begin{table*}[t]
    \centering
  \begin{tabular}{lllll}
    \toprule
    Algorithm  & Time training & Time prediction & Space training & Space prediction \\
    \midrule
    \textsc{LPE} & $\bigo(n)$ & $\bigo(nm)$ & $\bigo(n)$ & $\bigo(n)$\\
    \textsc{Kernel Ridge Reg.} & $\bigo(n^3)$ & $\bigo(nm)$ & $\bigo(n^2)$ & $\bigo(nm)$\\
    \textsc{DUPA} (algorithm \ref{alg:zero})  & $\bigo \left(n^{\frac{2\nu+3d}{2\nu+d}}\right)$ & $\bigo \left(mn^{\frac{d}{2\nu+d}}\right)$ & $\bigo \left(n^{\frac{2d}{2\nu+d}}\right)$ & $\bigo \left(n^{\frac{d}{2\nu+d}}\right)$ \\
    \bottomrule
  \end{tabular}  
    \caption{\label{tab:sota} Table containing the computational complexities of the algorithms with optimal statistical efficiency. Number of training samples: $n$, prediction samples: $m$.}
\end{table*}

\subsection{Multi-dimensional generalization}\label{sec:multid}

After showing that our algorithm works for the one-dimensional setting, we generalize the result to the regression problem of a function $f:[-1,1]^d\to \R$. The Fourier theory for multivariate functions is similar to that of the univariate case \cite{katznelson2004introduction}, and approximation theorems hold in the same way. Just, the feature map $\phi_N(\cdot)$, whose length was $2N+1$, gets replaced by $\bm \phi_N(\cdot)$, which contains interaction terms among the $d$ variables, and for this reason has length $N_d\approx N^d$. This worsens the results, as expected due to the infamous curse of dimensionality. Nonetheless, our algorithm \ref{alg:zero} is substantially unchanged, just we replace $\phi_N(\cdot)$ with $\bm \phi_N(\cdot)$ and $V_N(\cdot)$ with the multidimensional Vallèe de la Poussin's kernel \cite{nemeth2016vallee}.

\begin{restatable}{thm}{realmaind}\label{thm:realmaind}
    Run \textsc{DUPA} algorithm \ref{alg:zero} for
    $$N=\left(\frac{n}{\log (n/\delta)}\right)^{\frac{1}{2\nu+d}}\|f\|_{\mathcal C^{\nu}}^{\frac{2}{2\nu+d}}\sigma^{-\frac{2}{2\nu+d}}.$$
    Under assumptions \ref{ass:assuno},\ref{ass:assdue} we have with probability at least $1-\delta$, for all $|\bm \alpha|\le \nu_*$,
    \begin{align}\|D^{(\bm \alpha)}f(\cdot)-D^{(\bm \alpha)}\bm \phi_N(\cdot)^\top \widehat {\bm \theta}_n\|_{L^\infty}\lesssim \left(\frac{n}{\log (n/\delta)}\right)^{-\frac{\nu-|\bm \alpha|}{2\nu+d}}\|f\|_{\mathcal C^{\nu}}^{\frac{2|\bm \alpha|+d}{2\nu+d}}\sigma^{\frac{2\nu-2|\bm \alpha|}{2\nu+d}}.
    \label{eq:maind}\end{align} 
\end{restatable}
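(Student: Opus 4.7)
The plan is to first establish a multidimensional analog of Theorem \ref{thm:main} and then optimize $N$, exactly as Corollary \ref{cor:realmain} is derived from Theorem \ref{thm:main} in the one-dimensional setting. The intermediate bound I would prove is
\begin{equation*}
\|D^{(\bm \alpha)}f(\cdot)-D^{(\bm \alpha)}\bm \phi_N(\cdot)^\top \widehat {\bm \theta}_n\|_{L^\infty} = \bigot \left (N^{|\bm \alpha|}\left(\frac{\|f\|_{\mathcal C^{\nu}}}{N^{\nu}}+\frac{N^{d/2}}{\sqrt n}\sigma\right)\right ),
\end{equation*}
valid for all $|\bm \alpha|\le \nu_*$ whenever $16 N^d \log \log(N^d) < n$. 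This mirrors Equation \eqref{eq:mainthm} with the sole modification that $\sqrt{N/n}$ is replaced by $\sqrt{N_d/n} \sim N^{d/2}/\sqrt{n}$, since the multivariate feature map $\bm \phi_N$ has dimension $N_d \sim N^d$.

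To prove this intermediate bound I would replicate the four-part structure of the sketch of Theorem \ref{thm:main}. The ``fooling the learner'' identity of Section \ref{sec:fooling} extends immediately once the multidimensional de la Vallée Poussin kernel is defined as the tensor product $\bm V_N(x_1,\dots,x_d) := \prod_{j=1}^d V_N(x_j)$: it inherits a bounded $L^1$ norm $\le \Lambda_1^d$, so the positive/negative decomposition of Equation \eqref{eq:deco} still produces valid sampling densities, and Equation \eqref{eq:crucial} applies coordinate-wise. The discretization over $\mathcal C_\varepsilon$ now uses a tensor-product cover of $[-1,1]^d$ of cardinality $\bigo((N/\varepsilon)^d)$, which only inflates the log factors in the concentration step by a factor $d$. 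The optimal-design argument of Appendix \ref{app:optimaldesign}, applied to the feature set $\{\bm \phi_N(x): x \in \mathcal C_\varepsilon\}$, produces the $\sqrt{N_d/n}\,\sigma$ variance contribution. Finally, the bias term requires a multidimensional analog of Theorem \ref{thm:vecio}, namely
\begin{equation*}
\|D^{(\bm \alpha)} f - D^{(\bm \alpha)}(\bm V_N * f)\|_{L^\infty} \le K_d \,(2/N)^{\nu - |\bm \alpha|} \|f\|_{\mathcal C^\nu},
\end{equation*}
which follows from a multivariate Jackson-type estimate combined with the coordinate-wise reproducing property of $\bm V_N$ on tensor-product trigonometric polynomials.

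The optimization step is a short algebraic calculation. Plugging $N= n^{1/(2\nu+d)} \|f\|_{\mathcal C^\nu}^{2/(2\nu+d)} \sigma^{-2/(2\nu+d)}$ into the intermediate bound equates the bias $\|f\|_{\mathcal C^\nu}/N^\nu$ with the variance $N^{d/2}\sigma/\sqrt n$, so the bound collapses to $N^{|\bm \alpha|}\|f\|_{\mathcal C^\nu}/N^\nu = \|f\|_{\mathcal C^\nu}/N^{\nu-|\bm \alpha|}$; simplifying the exponents yields exactly the claimed rate $n^{-(\nu-|\bm \alpha|)/(2\nu+d)} \|f\|_{\mathcal C^\nu}^{(2|\bm \alpha|+d)/(2\nu+d)} \sigma^{(2\nu-2|\bm \alpha|)/(2\nu+d)}$. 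One should also verify that this $N$ satisfies $N^d \ll n$ asymptotically, which holds since $\nu>0$ implies $d/(2\nu+d)<1$, so the constraint $16 N^d \log \log(N^d) < n$ is eventually met.

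The main obstacle is the generalization of Theorem \ref{thm:vecio}. In one dimension, that result exploits the fact that $V_N$ reproduces every trigonometric polynomial of degree $\le N/2$; in $d$ dimensions the analogous reproducing property must be verified for the tensor-product polynomial space, and a straightforward reduction to the univariate case is complicated by the fact that $D^{(\bm \alpha)}$ couples all coordinates at once. The cleanest route is to exploit that convolution and partial differentiation commute in each variable separately, apply the 1D estimate coordinate-by-coordinate, and accumulate the resulting errors into a constant depending only on $d$ and $\nu$; alternatively one can invoke the multivariate de la Vallée Poussin approximation of \cite{nemeth2016vallee} directly, provided it is stated at the required level of generality for simultaneous derivative approximation.
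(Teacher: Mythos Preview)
Your proposal is correct and follows essentially the same route as the paper: an intermediate multidimensional analog of Theorem~\ref{thm:main} with the variance term $N^{d/2}\sigma/\sqrt{n}$, established via the same four-part structure, followed by the identical optimization of $N$ and the check that $N_d=\bigot(n^{d/(2\nu+d)})\ll n$. The only cosmetic differences are that the paper cites \cite{nemeth2016vallee} for the multivariate de la Vall\'ee Poussin kernel rather than defining it as a tensor product, and simply asserts that the proof of Theorem~\ref{thm:vecio} ``is the same in dimension $d>1$'' rather than flagging it as the main obstacle; also, your $\varepsilon$-cover cardinality should read $\bigo((1/\varepsilon)^d)\sim N^d$ rather than $(N/\varepsilon)^d$, though this only affects a logarithmic factor.
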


Once again, the rate in $n$ was proved to be asymptotically optimal for nonparametric regression. 

%The proof of this result does not bring particular insights more than the one of theorem \ref{thm:main}.

\subsection{Second-order bound}

In the previous section, we demonstrated that the infinite-norm error can be bounded with high probability under the assumption of sub-Gaussian noise. This choice, although standard in the modern statistical complexity literature due to its analytical tractability, can be overly conservative in scenarios where the local dispersion of the noise is significantly smaller than its global range. To overcome this limitation, it is possible to resort to a Bernstein-type second-order bound, which allows for refinement of the tail control of the distribution by integrating variance information.

In this section, we replace assumption \ref{ass:assuno} with the following.
\begin{ass}\label{ass:assunobis}
    \textit{(Passive design with bounded noise)} The agent is able to choose $n$ query points in advance, receiving a dataset of samples $\{x_i,y_i\}_{i=1}^n$ given by realizations of $Y_i$ such that    $\E[Y_i]=\E[f(x_i)]$ and $Y_i-f(x_i)$ is independent from $Y_j$ for $j\neq i$ and satisfies
    $$\text{Var}[Y_i-f(x_i)]\le \gamma^2\qquad |Y_i-f(x_i)|\le B\ \text{a.s.}$$
\end{ass}

Apart from introducing $\gamma$, the former assumes that the noise is bounded in $[-B,B]$. Any bounded random variable is sub-Gaussian, and the following relation holds $\gamma \le \sigma \le B$. Even if not all sub-Gaussian random variables are bounded, the following equation holds for a sequence of this type
$$\Prob\left(\max_{t=1,\dots n}|\xi_t|\le \sigma \sqrt{\log(n/\delta)}\right)\ge 1-\delta.$$
Therefore, any sample complexity bound can be easily generalized from the bounded to the sub-Gaussian case. The next theorem shows a second-order bound for the sample complexity of nonparametric regression.

\begin{restatable}{thm}{finalber}\label{thm:finalber}
    Run \textsc{DUPA} algorithm \ref{alg:zero} for
    $N=\left(\frac{n}{\log (n/\delta)}\right)^{\frac{1}{2\nu+d}}\|f\|_{\mathcal C^{\nu}}^{\frac{2}{2\nu+d}}.$
    Under assumptions \ref{ass:assunobis},\ref{ass:assdue} we have with probability at least $1-\delta$, for all $|\bm \alpha|\le \nu_*$,
    \begin{align}\|D^{(\bm \alpha)}f(\cdot)-D^{(\bm \alpha)}\bm \phi_N(\cdot)^\top \widehat {\bm \theta}_n\|_{L^\infty}&\lesssim \left(\frac{n}{\log (n/\delta)}\right)^{-\frac{\nu-|\bm \alpha|}{2\nu+d}}\|f\|_{\mathcal C^{\nu}}^{\frac{2|\bm \alpha|+d}{2\nu+d}}\max\{\gamma,1\}
    \label{eq:bert1}\\
    &\qquad +\left(\frac{n}{\log (n/\delta)}\right)^{-\frac{2\nu-2|\bm \alpha|}{2\nu+d}}\|f\|_{\mathcal C^{\nu}}^{\frac{4|\bm \alpha|+4d}{2\nu+d}}B\label{eq_bert2}.
    \end{align} 
\end{restatable}

The preceding theorem establishes a connection with classical results in nonparametric statistics, which assert that, under the assumption of unit variance noise and $\|f\|_{\mathcal C^{\nu}}=1$,
$$\|D^{(\bm \alpha)}f(\cdot)-\widehat f_{\text{LPE},n}^{\bm \alpha}\|_{L^\infty}\asymp \left(\frac{n}{\log n}\right)^{-\frac{\nu-|\bm \alpha|}{2\nu+d}},$$
with probability one, where $f_{\text{LPE},n}$ denotes the Local Polynomial Estimator. Our result, for $\gamma=1$, demonstrates that the uniform error of our estimator is asymptotic to
$\left(n/\log n\right)^{-\frac{\nu-|\bm \alpha|}{2\nu+d}}$, independently on $B$, which only appears in the lower-order term.

\section{Lower bound}\label{sec:lower}

As previously noted, the rate in $n$ obtained in the preceding theorem is asymptotically optimal, a result extensively documented in the non-parametric statistics literature. Nonetheless, to ensure that the aforementioned result is truly sharp, it is necessary to establish its optimality with respect to the constants that characterize the problem. Furthermore, given the framework of this work, we are interested in a high-probability finite-sample bound rather than an asymptotic one.
In this section, we prove a lower bound, showing that our last result, Corollary \ref{cor:realmain}, provides the best possible dependence in all variables.

\begin{restatable}{thm}{lowerbound}\label{thm:lowerbound}
    Any algorithm outputting an estimation $D^{(\bm \alpha)}\widehat f_n(\cdot)$ for $D^{(\bm \alpha)}f(\cdot)$ suffers an error lower-bounded by
    $$\|D^{(\bm \alpha)}f(\cdot)-D^{(\bm \alpha)}\widehat f_n(\cdot)\|\ge \Omega \left(n^{-\frac{\nu-|\bm \alpha|}{2\nu+d}}\nn^{\frac{2|\bm \alpha|+d}{2\nu+d}}\sigma^{\frac{2\nu-2|\bm \alpha|}{2\nu+d}}\right ),$$
    w.p. at least $1/4$, on a problem instance satisfying assumptions \ref{ass:assuno} and \ref{ass:assdue} with $\|f(\cdot)\|_{\mathcal C^\nu}\le \nn$.
\end{restatable}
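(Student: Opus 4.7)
The plan is to prove the lower bound via Fano's inequality applied to a classical bump-function hypothesis class, adapted to the periodic $\mathcal C^\nu$ setting. Fix a mother bump $\psi \in C^\infty_c((-1/2, 1/2)^d)$ with $\psi(\mathbf{0}) > 0$, and for a parameter $M$ to be tuned later set $h = 2/M^{1/d}$. Place $M$ rescaled translates $g_i(x) = a\,\psi((x-x_i)/h)$ on a regular grid inside $[-1,1]^d$. Because each $g_i$ has support strictly inside its own grid cell, periodic extension is trivial; a direct chain-rule computation yields $\|g_i\|_{\mathcal C^\nu} \lesssim a\, h^{-\nu}$, so the choice $a = c\, \nn\, h^\nu = c'\, \nn\, M^{-\nu/d}$ ensures every $g_i$ lies in the target class.

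Consider the $(M{+}1)$-element hypothesis set $\mathcal H = \{0, g_1, \dots, g_M\}$. Because the bumps have disjoint supports, $\|D^{(\bm\alpha)} g_i\|_{L^\infty} \gtrsim a\, h^{-|\bm\alpha|}$ is attained at $x_i$, so any two hypotheses are $(2s)$-separated in the target $L^\infty$ derivative norm with $s \asymp \nn\, M^{-(\nu-|\bm\alpha|)/d}$. A nearest-hypothesis rounding then shows that any estimator with $\|D^{(\bm\alpha)}\widehat f_n - D^{(\bm\alpha)} f\|_{L^\infty} < s$ induces a classifier that correctly identifies $f \in \mathcal H$. For the information bound I take the uniform prior on $\mathcal H$; by convexity of KL,
\[
I(Y;\Omega) \;\le\; \frac{1}{(M+1)^2}\sum_{i,j}\mathrm{KL}(P_i\,\|\,P_j).
\]
Taking the noise to be $\mathcal N(0,\sigma^2)$ (which is $\sigma$-sub-Gaussian, hence admissible under Assumption \ref{ass:assuno}) and writing $n_k$ for the number of non-adaptively chosen query points lying inside the support of $g_k$, disjointness of the supports gives $\mathrm{KL}(P_i\|P_j) \le (n_i+n_j)a^2/(2\sigma^2)$; since $\sum_k n_k \le n$, this collapses to $I(Y;\Omega) \le C\, n a^2/((M{+}1)\sigma^2)$.

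Fano's inequality then yields $\max_{f \in \mathcal H} P_f(\widehat\Omega \ne \Omega) \ge 1 - (I(Y;\Omega)+\log 2)/\log(M+1) \ge 1/4$ whenever $n a^2/(M\sigma^2) \lesssim \log M$. Combining this information constraint with the smoothness constraint $a \asymp \nn\, M^{-\nu/d}$ and balancing gives $M \asymp (n\nn^2/\sigma^2)^{d/(2\nu+d)}$ up to a logarithmic factor; substituting into $s$ produces exactly the exponents asserted in the theorem. The main obstacle I anticipate is the verification of $\|g_i\|_{\mathcal C^\nu}\le\nn$ when $\nu$ is non-integer, where the Hölder seminorm of the top-order derivative must be tracked explicitly under the rescaling $x\mapsto(x-x_i)/h$; once that scaling identity is established, the rest of the argument is bookkeeping around the information-theoretic constants and the logarithmic absorption into the $\widetilde{\mathcal O}$ notation.
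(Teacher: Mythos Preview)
Your construction of the hard class via compactly supported bumps on a grid is the same as the paper's, and the verification that $\|g_i\|_{\mathcal C^\nu}\le\nn$ under the scaling $a\asymp\nn\,h^{\nu}$ proceeds exactly as you outline (the paper carries this out for non-integer $\nu$ as well, in Lemma~\ref{lem:upperboundnorm}). Where you diverge is in the testing step: the paper uses a \emph{two-point} (Le~Cam style) argument, first applying pigeonhole to locate a single cube $Q_{\bm\ell_\star}$ receiving at most $n/K^d$ queries, then comparing $f_1\equiv 0$ against a single bump $f_2$ supported on that cube via Tsybakov's Theorem~2.2. You instead keep all $M{+}1$ hypotheses and invoke Fano. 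Both routes are valid; the two-point argument is shorter, avoids the mutual-information averaging, and delivers the $\Omega(\cdot)$ bound with no logarithms to discuss, while Fano requires more bookkeeping but generalizes more readily if one later wants bounds in expectation or under adaptive sampling.

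One correction: your closing remark about ``logarithmic absorption into the $\widetilde{\mathcal O}$ notation'' is misplaced. The Fano constraint $na^2/(M\sigma^2)\lesssim\log M$ is \emph{weaker} than the two-point constraint $\lesssim 1$, so Fano does not cost a logarithm here; with $M\asymp(n\nn^2/\sigma^2)^{d/(2\nu+d)}$ the left-hand side is an absolute constant while the right-hand side grows, so the condition holds for large $n$ with no log correction to $M$. The theorem claims $\Omega$, not $\widetilde\Omega$, and your argument delivers that as stated.
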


The idea of the theorem is to divide the set $[-1,1]^d$ into disjoint regions and work on each of them separately. The key to doing this is finding functions that are both smooth and zero at the boundaries of the regions, with all their derivatives. In this way, we can construct the hard instance by arbitrarily combining these functions, one for each region, while still preserving smoothness and periodicity. Once this is done, the proof proceeds via a standard KL divergence argument.

\section{Related works}\label{sec:related}

The problem of nonparametric regression for a smooth function is one of the most classical problems in statistical learning. Among the numerous approaches introduced over the years, we summarize the one that best fits our problem.

\paragraph{Parametric approaches}

Because smoothness is intrinsically local, establishing relations between nearby points is the most intuitive approach to our problem, and thus we apply local techniques. In the literature, several approaches based on piecewise polynomial regression \citep{sauve2009piecewise} have been studied, with different estimation schemes \citep{chaudhuri1994piecewise} and computational complexities \citep{lokshtanov2021efficient}. This family of methods would indeed work for estimating $f(\cdot)$, as well-known results \cite{chaudhuri1994piecewise} show that smooth functions can be approximated locally by polynomials (think of Taylor series). Unfortunately, these approaches fail to provide an estimated function $\widehat f_n$ whose derivatives approximate those of $f$; in fact, restricting regression to multiple intervals yields discontinuous regression functions at the boundaries of these regions.

Perhaps the approach closest to our work is that of \cite{maran2025beyond}. In their study, which addresses uniform bounds for linear regression with general feature maps, they employ the same sub-Gaussianity assumptions as ours to obtain sample-complexity results that hold across various scenarios. Their method (Theorem 11) can compete with the best regularizer for a given feature map, including the De la Vallée Poussin kernel used in this work. Nonetheless, due to lower-order terms, even in our scenario, their estimator does not achieve the optimal sample complexity for every triplet $d, \nu, \bm{\alpha}$, unlike ours. Furthermore, that paper focuses on function estimation rather than on its derivatives.

\subsection{Non-Parametric approaches}
As anticipated, the vast majority of the literature on non-parametric regression uses non-parametric estimators. Among the proposed methods, we identify four clusters that, to the best of our knowledge, encompass all the literature.

\paragraph{Local Polynomial estimators} Historically, the first approaches that are able to get optimal approximation properties for a function and its derivatives in norm $L^\infty$ are Local Polynomial Estimators (\textsc{LPE})
(see \cite{stone1982optimal,stone1983optimal,delecroix1996nonparametric,de2013derivative}  and pages 34-42 from \cite{tsybakov2009nonparametric} for a survey). This kind of estimator extends the Nadaraya–Watson (NW) family (see page 31 \cite{tsybakov2009nonparametric}), achieving, in the asymptotic case, the same guarantee of our algorithm \ref{thm:realmaind}. Like our \textsc{DUPA}, \textsc{LPE} is able to estimate the derivatives of the target function with optimal order, as proved in the seminal papers \cite{stone1982optimal,stone1983optimal}. However, the estimator does \textit{not} enjoy the plug-in property: the bandwidth, a crucial hyperparameter that characterizes the learning procedure, must be adjusted differently to estimate different derivatives.

\paragraph{Splines} As we have seen, a smooth function can be well approximated by its Taylor polynomials built on several regions that cover the domain, but unfortunately, this type of estimator function is not even continuous. A solution for this issue can be found in Splines \citep{wahba1990spline,quarteroni2010numerical}. Splines are, in brief, locally polynomial functions with a fixed degree of smoothness at the boundaries of the intervals. Several types of splines were introduced \citep{wahba1990spline,marsh2001spline, acharjee2022polynomial,wang2011smoothing}, always with the aim of obtaining an estimated function $\widehat f_n$ that is smooth on the whole domain.
Smoothing Splines are the most studied in the theory of nonparametric regression.
% cambiare
Still, to the best of our knowledge, no sample complexity results for $L^\infty$ error over a class of smooth functions have been shown. Current results \citep{stone1985additive,zhou2000derivative,li2008asymptotics,claeskens2009asymptotic,charnigo2011generalized} deal only with the $L^2$ error. These kinds of estimators are, in general, not plug-in and require the target function to be $(m+1)-$continuously differentiable in order to estimate the $m-$th derivative.

\paragraph{Finite difference methods}
Finite Difference (FD \cite{dai2018derivative,liu2018derivative}) methods offer a more "algebraic" and computationally direct alternative to Local Polynomial Estimation (LPE) for derivative estimation. While LPE fits a surface to data, FD approximates derivatives by taking weighted differences of function values at discrete points. Like Splines, these methods achieve order-optimal sample complexity, but only for the $L^2$ error. In general, these estimators do not enjoy the plug-in property and can require up to smoothness of order five to estimate the first derivative \cite{wang2015derivative}.

\paragraph{RKHS}
Recently, \cite{liu2023estimation} was able to prove that a modified version of Kernel ridge regression achieves quasi-optimal approximation $L^2$ of the derivatives under a finite-time analysis. Their algorithm enjoys the plug-in property, which, in the context of Kernelized Ridge Regression, means that the $\lambda$ parameter can be chosen once for all the derivatives. Precisely, to estimate a derivative of order $\alpha$ their estimator requires the function to be differentiable up to order $\alpha +1/2$, and their sample complexity scales as $(n/\log n)^{-\frac{\nu'-|\alpha|}{2\nu'+d}},$ for every $\nu'<\nu$. Technically, this rate is suboptimal, even if it can be made arbitrarily close to the optimal one. 

\subsection{Comparison with LPE}

As we have seen, the only algorithms that match our theoretical guarantees in $L^\infty$ are the LPE from the nonparametric statistics literature \citep{tsybakov2009nonparametric}. 
%Both algorithms work assuming the possibility of choosing the distribution of queries $\{x_i\}_{i=1}^n$ in a specific way (as it is necessary if we are interested in bounding $f(\cdot)$ uniformly). 
While our bound in Theorem \ref{thm:realmaind} is valid for every $n$ and for constants that can be exactly computed, the performance of \textsc{LPE} depends on a constant $\lambda_0^{-1}$, where $\lambda_0$ is only bounded away from zero asymptotically (\cite{tsybakov2009nonparametric} Lemma 1.4). On the other hand, an advantage of \textsc{LPE} is that their guarantee holds for $\{x_i\}_{i=1}^n$ that are uniformly distributed, while our result requires a more peculiar distribution. Arguably, the difference between the two approaches mostly concerns computational complexity.

\paragraph{Comparison on computational complexity.}
To compare our algorithm with \textsc{LPE}, we fix a number $n$ of training and $m$ of prediction samples. Results are summarized in table \ref{tab:sota}.
(\textsc{DUPA}) There are only two parts of algorithm \ref{alg:zero} that are computationally heavy: finding the optimal design at line \ref{algline:rho} and solving the linear regression problem at line \ref{algline:ls}. The former step can be performed in $kN_d^2$ steps (\cite{lattimore2020learning}), whereas the latter is well-known to require $nN_d^2+N_d^3$ (the computational complexity of linear regression). Replacing $k=1/\varepsilon=\bigo(N)$, we get a computational complexity of $\bigo(nN_d^2+N_d^3+mN_d)$. For the optimal choice $N_d=\bigo(n^{\frac{d}{2\nu+d}})$. This number leads to $\bigo(n^{\frac{2\nu+3d}{2\nu+d}}+mn^{\frac{d}{2\nu+d}})$. The space complexity in training corresponds to storing the design matrix, which means $\bigo(N_d)=\bigo(n^{\frac{2d}{2\nu+d}})$, while the one in prediction to $\bigo(n^{\frac{d}{2\nu+d}})$, as the vector of estimated $\widehat \theta_n$ is sufficient.
(\textsc{NW/LPE}) Both algorithms, like many non-parametric methods, are learned through lazy learning. That is, nothing is done in the training phase, and all samples are cycled through for every prediction. This leads to a computational complexity of $\bigo(mn)$ and a space complexity of $\bigo(n)$, both in training and prediction. The algorithm of \cite{liu2023estimation} enjoys the usual computational/space complexity of kernel-ridge regression.

Our algorithm is always faster in the prediction case, as $\frac{d}{2\nu+d}<1$. Moreover, the total computational time is superior to that of \textsc{LPE} provided that
$n^{\frac{2d}{2\nu+d}}<m,$
which holds if either we have many more predictions than training samples or if $n\approx m$ and $\nu>d/2$. This occurs in several realistic situations in which the function $f(\cdot)$ arises from a physical process. Thermal and electromagnetic phenomena are governed by the heat equation and the Laplace-Poisson equation, respectively \citep{sobolev1964partial,tikhonov2013equations,salsa2022partial}. Each of these is characterized by inherent smoothness properties, making the solution infinitely many times differentiable in the interior of the domain so that we can take $\nu=\infty$. For this favorable case, our computational complexity approaches the dream-like $\bigot(n+m)$, while the space is polylogarithmic, as it is possible to choose $N_d=\bigo(\log(n)^d)$. 
To close this section, we prove a novel result: no algorithm can achieve a space complexity lower than that of \textsc{DUPA} in the prediction phase.
\begin{restatable}{thm}{complower}\label{thm:complower}
    Any algorithm with optimal statistical complexity for a regression problem satisfying assumptions \ref{ass:assuno} and \ref{ass:assdue} must have a space complexity in the prediction of at least $\Omega\left(n^{\frac{d}{2\nu+d}}\right)$.
\end{restatable}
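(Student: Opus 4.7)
My plan is to combine the packing construction underlying Theorem~\ref{thm:lowerbound} with an information-theoretic compression argument. The point is simple: at the target accuracy $\varepsilon_n \asymp n^{-\nu/(2\nu+d)}$, there already exist exponentially many well-separated smooth functions in the class, and the state kept in memory after training must contain enough bits to distinguish among them.

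First, I would instantiate a packing of $\mathcal C_p^\nu([-1,1]^d)$ of the same flavor used for Theorem~\ref{thm:lowerbound}. Partition $[-1,1]^d$ into $N^d$ disjoint cubes of side $2/N$ with $N \asymp n^{1/(2\nu+d)}$, fix a smooth prototype bump $\psi$ compactly supported in the unit cube (so that all its derivatives vanish on the cube boundary), and in each cube place a rescaled and translated copy multiplied by a sign $s_k \in \{-1,+1\}$. The resulting function $f_s = \sum_k s_k \psi_k$ inherits periodicity, satisfies $\|f_s\|_{\mathcal C^\nu} \le \nn$ once the prototype is rescaled by a factor of order $N^{-\nu}$, and any two sign patterns $s\neq s'$ give $\|f_s - f_{s'}\|_{L^\infty} \ge 2c_0\,N^{-\nu}$ for an absolute constant $c_0>0$. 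This yields a packing of cardinality $M = 2^{N^d}$ inside the class, so that
\[
\log_2 M \;=\; N^d \;=\; \Omega\!\left(n^{\frac{d}{2\nu+d}}\right).
\]

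Next, I would convert the space-complexity question into a decoding problem. Let $\sigma \in \{0,1\}^b$ denote the state retained in memory after training, where $b$ is by definition the prediction-phase space complexity; the predictor $\widehat f_\sigma$ depends on the training data only through $\sigma$. With a sufficiently small constant $c$ in $N = c\,n^{1/(2\nu+d)}$, the optimal statistical guarantee ensures $\|\widehat f_\sigma - f\|_{L^\infty} < c_0\,N^{-\nu}$ with probability at least $3/4$ for every $f$ in the class. Let the decoder $\widehat s(\sigma)$ return the sign pattern $s$ minimizing $\|\widehat f_\sigma - f_s\|_{L^\infty}$; by the packing property this recovers $s$ whenever the true target is $f_s$ and the good event occurs. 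Drawing $S$ uniformly over $\{\pm 1\}^{N^d}$ and applying Fano's inequality then yields
\[
b \;\ge\; H(\sigma) \;\ge\; I(S;\sigma) \;\ge\; \tfrac{3}{4}\log_2 M - 1 \;=\; \Omega\!\left(n^{\frac{d}{2\nu+d}}\right),
\]
which is the claim.

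The main obstacle I expect is keeping the $\mathcal C^\nu$-norm bound on $f_s$ uniform in the sign pattern: one must verify that the glued function, together with its derivatives up to order $\nu_*$, satisfies the Hölder condition across cube boundaries, which in turn dictates how the prototype $\psi$ and the $N^{-\nu}$ rescaling must be calibrated. A secondary subtlety is handling algorithms whose predictions depend on randomness generated \emph{after} storing $\sigma$: one either folds the random seed into the stored state (at the cost of a constant factor), or observes that the accuracy guarantee already holds in expectation over that randomness, so that the randomized decoder still errs on at most a $1/4$ fraction of events and the Fano step goes through unchanged.
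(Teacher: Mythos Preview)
Your proposal is correct and follows essentially the same route as the paper: both build a $2^{N^d}$-packing of the $\mathcal C_p^\nu$ ball by placing rescaled compactly supported bumps in $N^d$ disjoint cubes (the paper uses $\{0,1\}$ indicators, you use $\pm 1$ signs---cosmetically equivalent), and then argue that the stored state must carry at least $\log_2$ of the packing number, which is $\Omega(n^{d/(2\nu+d)})$. Your Fano-based compression step is a clean formalization of the paper's one-line ``must be able to distinguish these $J$ instances, hence needs $\log_2 J$ bits''; the packing construction you flag as the main obstacle is exactly the content of the paper's Lemma~\ref{lem:packing} (via Lemma~\ref{lem:upperboundnorm}), so you have correctly anticipated where the work lies.
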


\begin{figure*}[t]
    \centering
    \includegraphics[width=0.75\linewidth]{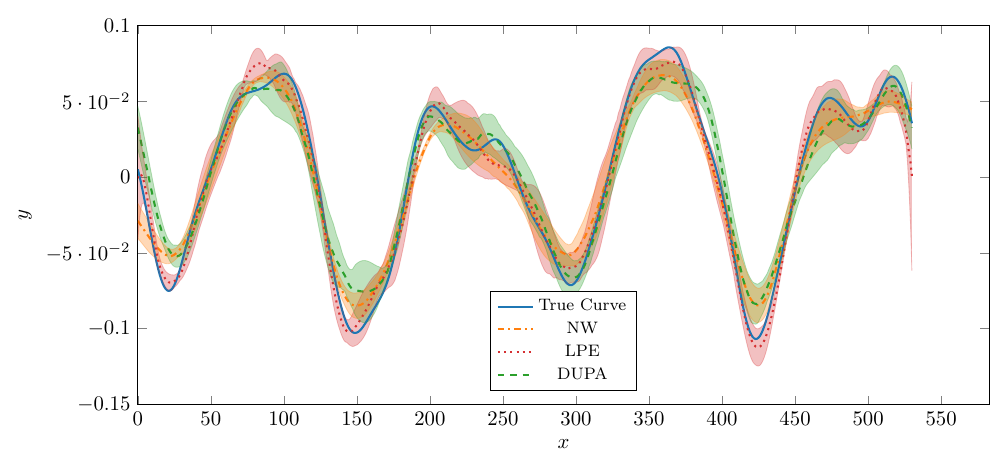}
    \caption{True unknown function $f(\cdot)$ used in the experiments and $95\%$ confidence regions for the predictions $f_n(\cdot)$ generated by three algorithms.}
    \label{fig:curves}
\end{figure*}

\section{Experiments}\label{sec:expe}

To empirically validate the results of our \textsc{DUPA} algorithm, we test it on a regression task of a smooth function. %Using an analytically defined function $f(\cdot)$ as the target would not be particularly informative, as functions that can be expressed as the composition of elemental functions are smooth of order $\nu=+\infty$. Making them less smooth, adding terms like $|x|$ is a viable option; still, we prefer to test our algorithm on real data, which are smooth in a less artificial manner.
As this paper is concerned about the case of $f(\cdot)$ being periodic, we focus on a common real case in which functions appear that are endowed with some kind of periodicity, that is, the one of audio signals \citep{purwins2019deep}. In particular, our target function $f(\cdot)$ has been extracted from the signal of the song \textit{Houdini} \textcopyright \citep{lipa2023houdini} in the following way. 
%The audio signal was loaded from the \texttt{.wav} file using the \texttt{librosa} library, which returns the waveform as a one-dimensional \texttt{numpy} array. 
The audio signal, originally comprising $8.2$ million time steps, was too complex for direct use in our regression task. To prepare the signal for our experiment, we divided the waveform into intervals of length between $500$ and $1000$ samples. These intervals were carefully selected to ensure periodicity at their boundaries by taking only those that start and end with values close to zero (which is possible for audio signals, where the waveform naturally oscillates around zero). Then, one interval was used for hyperparameter tuning of the algorithms, and another for testing their performance. The plot of the test function is shown in Figure \ref{fig:curves} as the blue solid line. For what concerns the noise of the observation, we have always used a zero-mean Gaussian of standard deviation $0.1$.

\paragraph{Algorithms and results} In this numerical experiment, we have compared our algorithm \textsc{DUPA} with the previously introduced Nadaraya–Watson (\textsc{NW}) and the Local Polynomial Estimators (\textsc{LPE}). Kernel Ridge Regression is not present since, even if \cite{liu2023estimation} prove that it can estimate the derivatives uniformly at an optimal order, its computational complexity is so bad that it is not feasible to perform the experiment. We evaluated the performance of our \textsc{DUPA} algorithm and the baseline methods across four different sample sizes: $n=100,200,500,1000$. Figure \ref{fig:errors} (left) shows that while for $n=1000$ all algorithms are able to estimate $f(\cdot)$ almost perfectly, the error of our algorithm \textsc{DUPA} decreases much faster than the one of \textsc{LPE}, and similarly to the one of \textsc{NW}. As the other panel \ref{fig:errors} (right) shows, \textsc{DUPA} obtains the best running time across the algorithms, outperforming \textsc{LPE} by orders of magnitude. In this experiment, we have taken the same values for $n$, while $m\approx 550$, as the length of the axis of figure \ref{fig:curves}, which has been rescaled to $[-1,1]$ in the simulation. Although both \textsc{LPE} and \textsc{NW} share the same theoretical computational complexity of $\bigo(nm)$, in practice, \textsc{LPE} is significantly slower because it requires solving a small linear regression for each prediction sample. Although the size of these regressions is negligible relative to $n$, the additional computations incur significant overhead, leading to a noticeable increase in running time.

\begin{figure*}[t]
    \centering
    \subfloat{\includegraphics[width=0.43\textwidth]{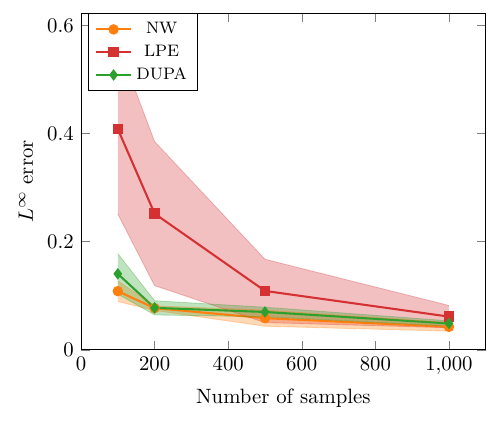}}\hfill
    \subfloat{\includegraphics[width=0.45\textwidth]{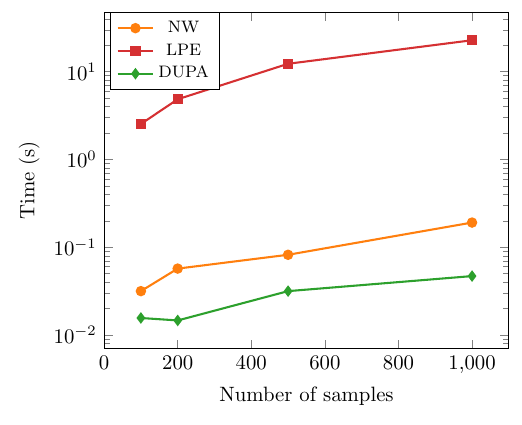}}\hfill
    \caption{Left: $L^\infty$ error of each of the algorithm, each averaged over $5$ random seeds, and with shaded regions representing $95\%$ coverage confidence intervals for the estimation. Right: log-scale plot of the running time of each algorithm, compared to the number $n$ of samples.}
    \label{fig:errors}
\end{figure*}

\section{Conclusions}

In this work, we revisited the classical problem of nonparametric regression through the lens of modern machine learning requirements. While uniform approximation guarantees for smooth functions are a cornerstone of nonparametric statistics, existing methods achieving optimal rates typically rely on nonparametric estimators whose computational and memory costs scale unfavorably with the sample size. This mismatch limits their applicability in contemporary learning scenarios where fast prediction, finite memory, and non-asymptotic guarantees are essential.

We showed that these limitations are not inherent to the problem itself. By combining tools from harmonic analysis, optimal experimental design, and finite-sample concentration, we introduced a parametric estimator that achieves minimax-optimal uniform convergence rates for both the target function and all its derivatives. Our guarantees hold in finite samples, are sharp with respect to all problem-dependent constants, and extend naturally to second-order (Bernstein-type) bounds that adapt to the variance of the noise.
Beyond statistical optimality, our approach leads to substantial gains in computational and memory efficiency. Unlike classical nonparametric estimators, the resulting predictor is lightweight at inference time, requiring storage only proportional to the number of parameters. We complemented our upper bounds with a matching lower bound, showing that this prediction-time memory complexity is information-theoretically optimal among all statistically optimal estimators.

Taken together, our results demonstrate that parametric methods, when carefully designed to respect the structure of smooth function classes, can match the statistical performance of nonparametric approaches while being fundamentally more compatible with the constraints of modern machine learning systems. We believe this perspective opens new avenues for bridging nonparametric statistics with areas such as bandit optimization, reinforcement learning, and continuous control, where uniform guarantees and computational efficiency are simultaneously required.

\subsection{Future works}

While this paper has focused on the classical notion of smoothness used in most works on Sobolev spaces, it is of paramount importance to generalize these results to broader classes of smooth function spaces. In particular, recent analyses have begun to investigate spaces of functions with \textit{dominating mixed smoothness} \cite{triebel2019function} to mitigate the curse of dimensionality.
Our sample complexity guarantees show that the error scales as $n^{-\frac{\nu-|\bm \alpha|}{2\nu+d}}$. While optimal, this decay becomes very slow in high dimension $d$. Under the dominating mixed smoothness assumption, however, it may be possible to obtain bounds that are less affected by the dimensionality of the domain.

\bibliography{yourbibfile}
\bibliographystyle{tmlr}

%\bibliographystyle{plainnat}
%\bibliography{yourbibfile}

\appendix

\section{Table of Notation}\label{app:not}
%In this section, we leave, for the reader's convenience, a table of the notations introduced in this paper.

\bgroup
\def\arraystretch{1.5}
\begin{tabular}{p{1in}p{3.25in}}
$d$ & Dimension of the space, e.g. $[-1,1]^d$\\
$n$ & number of samples available for the algorithm\\
$\sigma$ & sub-Gaussianity constant\\
$\R$ & Set of real numbers\\
$\nu$ & Index of space of differentiable functions $\mathcal C^\nu(\Xs)$\\
$\mathcal C^\nu(\Xs)$ & Space of differentiable functions $\mathcal C^\nu(\Xs)$ for some $\Xs \subset \R^d$\\
$\nu_*$ & $\lceil \nu - 1\rceil$\\
$L_{\nu}(f)$ & Lipschitz constant of $f$ w.r.t. the index $\nu$\\
$\alpha/\bm \alpha$ & index/multi-index of the derivative\\
$f^{(\alpha)}$ & derivative of a univariate function\\
$D^{(\bm \alpha)}f$ & multi-index derivative for a multivariate function\\
$|\bm \alpha|$ & norm of the multiindex, corresponding to $\|\bm \alpha\|_1=|\bm \alpha_1|+\dots +|\bm \alpha_d|$\\
$\|\cdot\|_{L^\infty}$ & supremum norm of a function, $\|f\|_{L^\infty}=\sup |f|$\\
$\|\cdot\|_{L^2}$ & $=\sqrt{\int_\Omega f(x)^2\ dx}$ for a function $f$\\
$\|\cdot\|_{\mathcal C^{\nu}}$ & norm over $\mathcal C^{\nu}$\\
$\mathbb T_N$ & Space of trigonometric polynomial of degree not exceeding $N$\\
$T_N(\cdot)$ & Element of $\mathbb T_N$\\
$\phi_N(\cdot)$ & Fourier (sin-cos) feature map\\
$\mathcal F_N[f](\cdot)$ & Fourier series of order $N$ associated to $f$\\
$D_N(\cdot)$ & Dirichlet Kernel\\
$V_N(\cdot)$ & De la Valèe Poussin Kernel\\
$\beta_+V_N^+(\cdot)- \beta_-V_N^-(\cdot)$ & Decomposition of the kernel $V_N(\cdot)$ (see \eqref{eq:deco})\\
$N_{d}$ & $\binom{N+d}{N}$\\
$\Sigma$ & Design matrix
\end{tabular}
\egroup

\textbf{Notation (Part 2)}

\bgroup
\def\arraystretch{1.5}
\begin{tabular}{p{1in}p{3.25in}}
    $\bigo(\cdot)$ & Order of something, ignoring constants\\
    $\bigot(\cdot)$ & Order of something, ignoring constants and logarithms\\
    $k$ & See algorithm \ref{alg:zero}\\
    $\varepsilon$ & See algorithm \ref{alg:zero}\\
    $n_{\text{tot}}$ & See algorithm \ref{alg:zero}\\
    $\text{supp}(\rho)$ & Support of a probability distribution $\rho(\cdot)$ (intersection of all closed sets of probability one)\\
    $\mathcal N(x;\mu,\sigma)$ & Gaussian density function of parameters $\mu,\sigma$ evaluated in $x$\\
    $\Psi(\cdot)$ & Standard mollifier (see equation \ref{eq:standmoll})
\end{tabular}
\egroup

\section{Proofs of section \ref{sec:fourier}}\label{app:fourier}
\vecio*
\begin{proof}
    The proof largely builds on Theorem \ref{thm:maran}. Indeed, for $\alpha=0$ we have exactly the former result, while for $\alpha>0$ we note that, by the properties of convolution, which commutes with differentiation, we have,
    $(V_N*f)^{(\alpha)}=V_N*(f^{(\alpha)})$; therefore,
    $$\|f^{(\alpha)}-(V_N*f)^{(\alpha)}\|_{L^\infty}=\|f^{(\alpha)}-V_N*(f^{(\alpha)})\|_{L^\infty}.$$
    At this point, we note that the function $f^{(\alpha)} \in \mathcal C^{\nu-\alpha}([-1,1])$, as $f\in \mathcal C_p^{\nu}([-1,1])$. Therefore, we can apply theorem \ref{thm:maran} for $\nu'=\nu-\alpha$ to ensure
    $$\|f^{(\alpha)}-(V_N*f)^{(\alpha)}\|_{L^\infty}\le K(2/N)^{\nu+\alpha}\|f^{(\alpha)}\|_{\mathcal C^{\nu-\alpha}}.$$
    Bounding $\|f^{(\alpha)}\|_{\mathcal C^{\nu-\alpha}}$ by its definition, we can complete the proof: \begin{align*}\|f^{(\alpha)}\|_{\mathcal C^{\nu-\alpha}}&=\max \left \{\max_{\beta\le \nu_*-\alpha}\|D^{\beta}f^{(\alpha)}\|_{L^\infty}, L_{\nu-\alpha}(f^{(\alpha)})\right \}\\
    & =\max \left \{\max_{\beta\le \nu_*-\alpha}\|D^{\beta+\alpha}f\|_{L^\infty}, L_{\nu}(f)\right \}\\
    & =\max \left \{\max_{\alpha\le \beta\le \nu_*}\|D^{\beta}f\|_{L^\infty}, L_{\nu}(f)\right \}\\
    &\le \|f\|_{\mathcal C^{\nu}}.
    \end{align*}
\end{proof}

\begin{thm}(Theorem 2 in \cite{maran2024projection}) \label{thm:maran}
    Let $f\in \mathcal C_p^{\nu}([-1,1])$. Then, $V_N*f\in \mathbb T_{N}$, and we have
    $$\|f(\cdot)-V_N*f(\cdot)\|_{L^\infty} \le K_1\inf_{T_{N/2}\in \mathbb T_{N/2}}\|T_{N/2}(\cdot)-f(\cdot)\|_{L^\infty}\le K_2(2/N)^{\nu}\|f\|_{\mathcal C^{\nu}},$$
    where $\mathbb T_N$ denotes the space of trigonometric polynomials of degree not higher than $N$ and $K_1,K_2$ are universal constants.
\end{thm}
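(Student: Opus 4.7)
The plan is to split the chain of two inequalities into two qualitatively different arguments. The second inequality is a near-immediate corollary of Theorem \ref{thm:liter} already quoted in the paper, so essentially all the work lies in the first inequality, which says that $V_N*f$ is a near-best $L^\infty$ approximation of $f$ from $\mathbb{T}_N$. Before getting to the bounds, I would dispatch the membership claim: since $V_N = \tfrac{1}{N/2+1}\sum_{t=N/2}^{N} D_t$ is itself a trigonometric polynomial of degree at most $N$ (each $D_t\in\mathbb{T}_t \subseteq \mathbb{T}_N$), the circular convolution $V_N*f$ acts as a Fourier multiplier that kills all modes $|k|>N$, so $V_N*f \in \mathbb{T}_N$.

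The backbone of the first inequality is a reproducing property: for every $T\in\mathbb{T}_{N/2}$ one has $V_N*T=T$. Indeed, using the identity $D_t*g = \mathcal{F}_t[g]$ already established in the paper, for any $t\ge \deg T$ we have $D_t*T=T$; since every $t$ in the defining average of $V_N$ satisfies $t\ge N/2\ge \deg T$, averaging gives $V_N*T=T$. Combining this with the linearity of convolution, for every $T\in\mathbb{T}_{N/2}$ we can write
\[
f - V_N*f \;=\; (f-T) \;-\; V_N*(f-T),
\]
and Young's inequality for circular convolution, together with the uniform $L^1$ bound $\|V_N\|_{L^1}\le \Lambda_1$ from Equation \eqref{eq:norm1}, yields
\[
\|f - V_N*f\|_{L^\infty} \;\le\; \|f-T\|_{L^\infty} + \|V_N\|_{L^1}\|f-T\|_{L^\infty} \;\le\; (1+\Lambda_1)\,\|f-T\|_{L^\infty}.
\]
Taking the infimum over $T\in\mathbb{T}_{N/2}$ proves the first inequality with $K_1 := 1+\Lambda_1$.

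For the second inequality I would invoke Theorem \ref{thm:liter} in one dimension with $N/2$ in place of $N$, which gives $\inf_{T_{N/2}\in\mathbb{T}_{N/2}}\|f-T_{N/2}\|_{L^\infty} \le K(N/2)^{-\nu}\|f\|_{\mathcal{C}^\nu}$, so that the chain closes with $K_2 := 2^\nu K K_1$. The step I expect to be the most delicate is the uniform $L^1$ bound $\|V_N\|_{L^1}\le \Lambda_1$: this is precisely why one replaces the Dirichlet kernel by the de la Vall\'ee Poussin kernel, since the Lebesgue constant $\|D_N\|_{L^1}$ grows like $\log N$ and would reintroduce a logarithmic loss that would break the argument; here I would cite \cite{mehta2015l1} and treat that bound as a black box. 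The remaining subtlety worth double-checking is the reproducing identity, which depends only on the fact that Dirichlet convolution is the $L^2$ projection onto $\mathbb{T}_t$ and hence acts as the identity on the nested subspace $\mathbb{T}_{N/2}$.
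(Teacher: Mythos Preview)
Your argument is correct and is precisely the classical de la Vall\'ee Poussin near-best approximation proof: the reproducing identity $V_N*T=T$ for $T\in\mathbb{T}_{N/2}$, the decomposition $f-V_N*f=(f-T)-V_N*(f-T)$, and Young's inequality together with the uniform $L^1$ bound \eqref{eq:norm1} give $K_1=1+\Lambda_1$, and Theorem~\ref{thm:liter} at degree $N/2$ closes the chain. Note that the paper itself does not prove this statement but quotes it from \cite{maran2024projection}; your write-up is exactly the standard argument one would expect to find there. One trivial slip: since $(N/2)^{-\nu}=(2/N)^\nu$ already, the constant is $K_2=K_1K$, not $2^\nu K_1K$ --- this matters only because the statement asserts $K_2$ is universal (independent of $\nu$), which your extra $2^\nu$ would spoil.
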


\section{(Quasi-)Optimal Design}\label{app:optimaldesign}

Algorithm \ref{alg:zero} requires a quasi-optimal design \citep{kiefer1960equivalence} to choose which points to sample. To this aim, we recall a result by \citep[][Theorem 4.3]{lattimore2020learning}.

\begin{thm} (\cite{lattimore2020learning}, Theorem 4.4)\label{thm:KW}
    Suppose $\mathcal X \subset \R^{N}$ is a compact set spanning $\R^{N}$. We can find a probability distribution $\rho$ on $\mathcal X$ such that $|\text{supp}(\rho)|\le 4{N}\log\log({N})$ and, once defined
    $$\Sigma = \E_{\bm x\sim \rho}[\bm x \bm x^\top],$$
    we have, for all $\bm x \in \Omega$, $\|\bm x\|_{\Sigma^{-1}}^2\le 2{N}$ (here the notation $\|\bm x\|_{\Sigma^{-1}}$ stands for $\sqrt{\bm x^\top \Sigma^{-1} \bm x}$).
\end{thm}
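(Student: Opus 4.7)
The plan is to obtain this sparse design via a Frank--Wolfe iteration on the G-optimal design objective, following the classical construction of Atwood--Wynn--Todd. Let $\Sigma(\pi):=\mathbb{E}_{x\sim\pi}[xx^\top]$ and define the G-criterion $g(\pi):=\max_{x\in\mathcal X}\|x\|^2_{\Sigma(\pi)^{-1}}$. The Kiefer--Wolfowitz equivalence theorem will be invoked first: it gives $\min_\pi g(\pi)=N$ and identifies the minimiser with the maximiser of $\log\det\Sigma(\pi)$. I would not try to reach the exact minimum $N$, but rather the relaxed value $g(\pi)\le 2N$, which is reached fast enough to keep the support sparse.

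Next I would initialise $\pi_0$ uniformly on $N$ linearly independent atoms of $\mathcal X$ (they exist by the spanning hypothesis, ensuring $\Sigma(\pi_0)$ is nonsingular). At each round $t$, select $x_t\in\arg\max_{x\in\mathcal X}\|x\|^2_{\Sigma(\pi_t)^{-1}}$ with value $f_t$, halt if $f_t\le 2N$, and otherwise form the Frank--Wolfe update $\pi_{t+1}=(1-\gamma_t)\pi_t+\gamma_t\delta_{x_t}$ with the optimal step $\gamma_t=(f_t-N)/(N(f_t-1))$. Since each iteration adjoins at most one atom, the final support has size at most $N+T$, where $T$ is the number of iterations until termination.

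To bound $T$, I would track the potential $\Phi_t:=\log\det\Sigma(\pi_t)$. The matrix-determinant lemma gives the exact identity
\[
\Phi_{t+1}-\Phi_t\;=\;N\log(1-\gamma_t)+\log\!\Bigl(1+\tfrac{\gamma_t f_t}{1-\gamma_t}\Bigr),
\]
which for the chosen $\gamma_t$ reduces to a strictly positive explicit function of $f_t/N$. Two regimes emerge: in the coarse phase $f_t\ge 3N$ the increment exceeds a universal constant, so at most $O(N)$ iterations suffice to enter the fine phase $f_t\in[N,3N]$; in the fine phase the recursion $f_{t+1}-N\lesssim (f_t-N)^2/N$ is quadratic, yielding doubly exponential contraction and driving $f_t$ below $2N$ in a further $O(N\log\log N)$ steps. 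Comparing the cumulative gain in $\Phi_t$ to its global upper bound, obtained from compactness of $\mathcal X$, closes the counting argument and gives the support bound $4N\log\log N$ up to absorbed constants.

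The main obstacle will be the fine-phase analysis. The quadratic contraction requires a second-order expansion of the log-determinant matched against the step-size choice, and the explicit constants $4$ and $2$ in the statement come from tight bookkeeping in that expansion rather than being free parameters. Given Kiefer--Wolfowitz and the Frank--Wolfe potential argument, the rest is essentially algebraic.
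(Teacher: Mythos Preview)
The paper does not supply its own proof of this statement: Theorem~\ref{thm:KW} is quoted from \cite{lattimore2020learning} as an external tool in Appendix~\ref{app:optimaldesign}, so there is nothing to compare your argument against. Your sketch is the standard Frank--Wolfe/Khachiyan construction that underlies the cited result, and the architecture---Kiefer--Wolfowitz equivalence, greedy atom insertion with the exact line-search step $\gamma_t=(f_t-N)/(N(f_t-1))$, log-determinant potential, a coarse phase with constant per-step progress, and a fine phase with quadratic contraction producing the $\log\log N$ factor---is correct and matches the proof in the reference.

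One point deserves more care than you give it. Bounding the coarse phase by $O(N)$ iterations needs $\Phi_{\max}-\Phi_0=O(N)$, and this does not follow from selecting \emph{arbitrary} linearly independent atoms of $\mathcal X$: a badly conditioned initial simplex can make $\Phi_0$ arbitrarily negative, and compactness of $\mathcal X$ bounds $\Phi_{\max}$ from above but says nothing about $\Phi_0$. The usual remedy is to initialise with a crude volume-greedy basis (or a rough John ellipsoid), which guarantees $\Phi_{\max}-\Phi_0=O(N\log N)$; the fine-phase $O(N\log\log N)$ count then still dominates and yields the stated support bound. With that initialisation caveat addressed, your plan is sound.
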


The distribution $\rho$ defined in Theorem \ref{thm:KW} is called \textit{quasi-optimal design} for the least square problem. "Quasi" in the previous definition is because the actual optimal design satisfies $\bm x \in \Omega$, $\|\bm x\|_{\Sigma^{-1}}^2\le {N}$; unfortunately, the latter result would require a larger support for $\rho$, and it is not suitable for our problem henceforth.

\section{Proofs of section \ref{sec:algo}}

\main*
\begin{proof}
    \textbf{(Part 1: Expected target and sub-Gaussianity)} We start showing that the average target of the regression problem is exactly linear in the feature map.
    Indeed, we have, for every $i=1,\dots n$,
    \begin{align}
        \E[y_i] &= \E[\beta_+y_i^+]-\E[\beta_-y_i^-]\\
        &= \beta_+\E[y_i^+]-\beta_-\E[y_i^-]\\
        &= \beta_+\E[f(x_i+\eta_+)]-\beta_-\E[f(x_i+\eta_-)]\label{pass:def}\\
        &= \beta_+V_N^+*f(x_i)-\beta_-V_N^+*f(x_i)\label{pass:def2}\\
        &=V_N*f(x_i)\label{pass:def3}.
    \end{align}

    Here, step \ref{pass:def2} follows from \ref{eq:crucial}. Now, let us focus on the sub-Gaussianity. We have

    \begin{align}
        y_i-V_N*f(x_i) &= \beta_+y_i^+-\beta_-y_i^--V_N*f(x_i)\\
        &= (\beta_+y_i^+-\beta_+\E[y_i^+])-(\beta_-y_i^--\beta_-\E[y_i^-])\\
        &= \beta_+\underbrace{(y_i^+-\E[y_i^+])}_{T1}-\underbrace{\beta_-(y_i^--\E[y_i^-])}_{T2}.
    \end{align}

    Consider $(T1)$. This term is $\sigma-$sub-Gaussian: fix $\lambda>0$

    \begin{align*}
        \E[e^{\lambda(y_i^+-\E[y_i^+])}] &= \E[\E[e^{\lambda(y_i^+-\E[y_i^+])}|\eta_+]]\\
        &=\E[\E[e^{\lambda(y_i^+-f(x_i+\eta_+))}|\eta_+]]\\
        &\le \E[e^{\lambda^2\sigma^2/2}|\eta_+] = e^{\frac{\lambda^2\sigma^2}{2}},
    \end{align*}

    where in the inequality we have used assumption \ref{ass:assuno}. The same reasoning trivially applies to $(T2)$, which is also $\sigma-$sub-Gaussian. Therefore, looking at the sum of two independent sub-Gaussians, we have that $y_i-V_N*f(x_i)$ is $\sigma'-$sub-Gaussian with
    $$\sigma'=({\beta_+^2+\beta_-^2})^{1/2}\sigma\le (|\beta_+|+|\beta_-|)\sigma=\Lambda_1\sigma.$$

    Where $\Lambda_1$ is the constant of equation \ref{eq:norm1}.    
    Having proved that $y_i$ is unbiased w.r.t. $V_N*f(x_i)$ allows us to ensure the existence of some $\theta_\star\in \R^{2N+1}$ (as $\mathbb T_N$ is a vector space of dimension $2N+1$, as clear from equation \eqref{eq:fourier}) such that 

    $$\E[y_i] =V_N*f(x_i)=\phi_N(x_i)^\top \theta_\star,$$

    as it follows from the first part of theorem \ref{thm:vecio} (the writing $\phi_N(x_i)^\top \theta_\star$ corresponds to saying that the function belongs to $\mathbb T_N$). 
    
    \textbf{(Part 2: The discretization makes no harm)} At this point, we can apply proposition 2 from \cite{maran2024projection}, with $\bm x_t=\phi_N(x_t)$ and $\mathcal X'=\mathcal C_\varepsilon$, defined on line \ref{algline:x}. In this way, $k=1/\varepsilon$. The latter result proves, with probability $1-\delta$,

    \begin{equation}\forall x\in \mathcal C_\varepsilon, \qquad |\phi_N(x)^\top \widehat \theta_n-\phi_N(x)^\top \theta_\star|\le \sqrt{\log(k/\delta)}\sup_{x\in \mathcal C_\varepsilon}\|x\|_{\Sigma_n^{-1}}\sigma',\label{eq:optimdes}\end{equation}

    where

    $$\Sigma_n:=\sum_{i=1}^n \phi(x_i)\phi(x_i)^\top\qquad \sigma'=\Lambda_1\sigma.$$

    At this point, we have to generalize the previous bound to all points of $[-1,1]$, even not belonging to $\mathcal C_\varepsilon$.     
    To do this, we call $\Delta f_n:=\phi_N(\cdot)^\top (\widehat \theta_n- \theta_\star)\in \mathbb T_N$.
    The fact that the function is a trigonometric polynomial allows us to apply Lagrange's theorem, which ensures that, for any couple of points $x_1,x_2$, there is $x'\in [x_1,x_2]$ such that

    \begin{equation}\Delta f_n(x_1)-\Delta f_n(x_2)=\Delta f_n^{(1)}(x')(x_1-x_2).\label{eq:lagrange}\end{equation}

    Therefore, we have
    \begin{align}
        \|\Delta f_n(\cdot)\|_{L^\infty} &= \sup_{x_1\in [-1,1]} |\Delta f_n(x_1)|\\
        &\overset{\eqref{eq:lagrange}}{=} \sup_{x_1\in [-1,1]} \inf_{x_2\in \mathcal C_\varepsilon}|\Delta f_n(x_2)-\Delta f_n^{(1)}(x')(x_1-x_2)|\\
        &\le \sup_{x_1\in [-1,1]} \inf_{x_2\in \mathcal C_\varepsilon}|\Delta f_n(x_2)|+|\|\Delta f_n^{(1)}\|_{L^\infty}(x_1-x_2)|\label{pass:bnorm}\\
        &\le \sup_{x\in \mathcal C_\varepsilon}|\Delta f_n(x)|+\varepsilon \|\Delta f_n^{(1)}\|_{L^\infty}\label{pass:epsilon}.
    \end{align}

    Here, \eqref{pass:bnorm} follows from bounding the derivative with its infinity norm, and the one \eqref{pass:epsilon} from choosing $x_2$ to be the nearest element of $x_1$ on $\mathcal C_\varepsilon$.
    Finally, we can bound $\|\Delta f_n^{(1)}\|_{L^\infty}$. Indeed, due to the fact that the function is a degree-$N$ trigonometric polynomial, by Theorem \ref{thm:key} we have

    \begin{equation}        
    \|\Delta f_n^{(1)}\|_{L^\infty}\le 4\pi N\|\Delta f_n(\cdot)\|_{L^\infty}.\label{eq:carepoin},\end{equation}

    which, once merged with the previous result, provides

    \begin{equation}
        \|\Delta f_n(\cdot)\|_{L^\infty} \le \frac{1}{1-4\pi \varepsilon N}\sup_{x\in \mathcal C_\varepsilon}|\Delta f_n(x)|. \label{eq:restless}
    \end{equation}

    Merging equations \eqref{eq:optimdes} and \eqref{eq:restless}, we get

    $$\|\Delta f_n(\cdot)\|_{L^\infty} \le \frac{1}{1-4\pi \varepsilon N}\sqrt{\log(k/\delta)}\sup_{x\in \mathcal C_\varepsilon}\|x\|_{\Sigma_n^{-1}}\sigma'.$$

    \textbf{(Part 3: Optimal design)} Here, what is missing is to estimate $\|x\|_{\Sigma_n^{-1}}$. This can be done by line \ref{algline:rho}: indeed,
    \begin{itemize}
        \item $\rho$ is a quasi-optimal design for $\mathcal X_\phi$.
        \item The samples are collected according to $\lceil n_{\text{tot}}\rho(x)\rceil$, which dominates $n_{\text{tot}}\rho(x)$.
    \end{itemize}
    Therefore, from theorem \ref{thm:KW}, $\|x\|_{\Sigma_n^{-1}}\le \sqrt {2(2N+1)/n_{tot}}$. This result proves that

    \begin{equation}
        \|\Delta f_n(\cdot)\|_{L^\infty} \le \frac{1}{1-4\pi \varepsilon N}\sqrt{5\log(k/\delta)N/n_{tot}}\sigma'.\label{eq:finale}
    \end{equation}

    Moreover, Theorem \ref{thm:key} ensures that an analogous result also holds for the derivatives:

    \begin{equation}
        \forall \alpha \in \N \qquad \|\Delta f_n^{(\alpha)}(\cdot)\|_{L^\infty} \le \frac{(4\pi)^\alpha N^{1/2+\alpha}}{1-4\pi \varepsilon N}\sqrt{5\log(k/\delta)/n_{tot}}\sigma'.\label{eq:finale2}
    \end{equation}

    \textbf{(Part 4)} At this point, it is just a matter of substituting the known constants and applying theorem \ref{thm:vecio}. For any $\alpha \in \{0,\dots \nu_*\}$ we have

    \begin{align}
        \|f^{(\alpha)}-\phi_N^{(\alpha)}(\cdot)^\top \widehat \theta_n\|_{L^\infty} &\le  \|f^{(\alpha)}-(V_N*f)^{(\alpha)}\|_{L^\infty}+\|(V_N*f)^{(\alpha)}-\phi_N^{(\alpha)}(\cdot)^\top \widehat \theta_n\|_{L^\infty}\\
        &=\|f^{(\alpha)}-(V_N*f)^{(\alpha)}\|_{L^\infty}+\|\Delta f_n^{(\alpha)}\|_{L^\infty}\\
        &\overset{\text{thm. }\ref{thm:vecio}}{\le} K(2/N)^{\nu-\alpha}\|f\|_{\mathcal C^{\nu}}+\|\Delta f_n^{(\alpha)}\|_{L^\infty}\\
        &\overset{\eqref{eq:finale2}}{\le} K(2/N)^{\nu-\alpha}\|f\|_{\mathcal C^{\nu}}+\frac{(4\pi)^\alpha N^{1/2+\alpha}}{1-4\pi \varepsilon N}\sqrt{5\log(k/\delta)/n_{tot}}\sigma',
    \end{align}

    where the first inequality comes from the triangular inequality, the second inequality from theorem \ref{thm:vecio}, and the third one from equation \eqref{eq:finale2}, which we have obtained in this proof. At this point, we can replace $n_{tot}=\lfloor n/4\rfloor$, $k=\varepsilon^{-1}$, $\sigma'=\Lambda_1\sigma$ and $4\pi \varepsilon N=1/2$ to get

    $$
        \|f^{(\alpha)}-\phi_N^{(\alpha)}(\cdot)^\top \widehat \theta_n\|_{L^\infty} \le K(2/N)^{\nu-\alpha}\|f\|_{\mathcal C^{\nu}}+2(4\pi)^\alpha N^{1/2+\alpha}\sqrt{5\log(\varepsilon^{-1}/\delta)/n_{tot}}\Lambda_1\sigma',
    $$

    which means, up to constant factors,
    \begin{align}
        \|f^{(\alpha)}-\phi_N^{(\alpha)}(\cdot)^\top \widehat \theta_n\|_{L^\infty} \lesssim N^\alpha\left(\frac{\|f\|_{\mathcal C^{\nu}}}{N^\nu}+\sqrt{\frac{N\log(n/\delta)}{n}}\sigma\right).
    \end{align}

    Let us now bound the number of samples collected by the algorithm. The queries for the model happen at lines \ref{algline:query1} and \ref{algline:query2}, so that the total number of samples is
    $$2\sum_{x\in \text{supp}(\rho)} \lceil n_{\text{tot}}\rho(x)\rceil \le 2n_{\text{tot}}+2|\text{supp}(\rho)|\le 2n_{\text{tot}}+8N\log \log(N).$$

    By assumption, both $2n_{\text{tot}}$ and $8N\log \log(N)$ are less than $n/2$, so that this number is itself bounded by $n$ as required by the algorithm.
    
\end{proof}

\section{Proofs from section \ref{sec:multid}}

In this section, we are going to generalize the main result of this paper, Theorem \ref{thm:main}, to the case of functions with domain $[-1,1]^d$.
Fourier series in $[-1,1]^d$ can be built in a similar way to the univariate setting \cite{katznelson2004introduction}. Still, this time, the space $\mathbb T_{N,d}$ of trigonometric polynomials of degree $N$ no longer has dimension $2N+1$, as we also need to take into account the mixed terms between the $d$ variables. In fact, it was proved (see section 3 from \cite{maran2024projection}) that these trigonometric polynomials take the form

$$T_N(\bm x)=\bm \phi_d(\bm x)^\top \bm \theta,\qquad \bm \theta\in \R^{N_d},\ \ N_d=\binom{2N+d}{d}.$$

In fact, the latter number corresponds to the number of integer-valued vectors $\bm v$ such that $\|\bm v\|_1\le N$, and can be bounded by $(2N+1)^d$.

About the approximation properties, we have that both theorem \ref{thm:liter}, the result to approximate the smooth function with trigonometric polynomials, and theorem \ref{thm:vecio}, the projection with the la Valée Poussin Kernel are still valid (the proof of theorem \ref{thm:vecio} is the same in dimension $d>1$).
Also, the results about this latter function maintain their validity, just that this time:

\begin{equation}\label{eq:norm1d}
    \|V_{N,d}(\cdot)\|_{L^1}\le \Lambda_d,
\end{equation}

a dimension-dependent constant $\Lambda_d$ whose expression can be found in \cite{nemeth2016vallee}.

\begin{thm}\label{thm:pre-maind}
    Assume to run algorithm \ref{alg:zero} for a choice $N$ such that $16N_d\log \log(N_d)<n$. With probability at least $1-\delta$, the output satisfies, for all $|\bm \alpha|\le \nu_*$,
    $$\|D^{(\bm \alpha)}f(\cdot)-D^{(\bm \alpha)}\bm \phi_d(\cdot)^\top \widehat \theta_n\|_{L^\infty}\lesssim N^{|\bm \alpha|}\left(\frac{\|f\|_{\mathcal C^{\nu}}}{N^\nu}+\sqrt{\frac{N^{d}\log (n/\delta)}{n}}\sigma\right).$$
\end{thm}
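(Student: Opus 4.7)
The plan is to mirror the four-part structure of the proof of Theorem~\ref{thm:main}, substituting the univariate feature map $\phi_N$ with the multivariate one $\bm\phi_d$ (of length $N_d = \binom{2N+d}{d} \le (2N+1)^d$), the kernel $V_N$ with $V_{N,d}$ and its $L^1$ bound $\Lambda_1$ with the dimension-dependent $\Lambda_d$ of equation~\eqref{eq:norm1d}. First, in Part~1 I would verify that, taking $\eta_+ \sim V_{N,d}^+$ and $\eta_- \sim V_{N,d}^-$, the combined target $y_i = \beta_+ y_i^+ - \beta_- y_i^-$ still satisfies $\E[y_i] = V_{N,d} * f(x_i)$ via the decomposition~\eqref{eq:deco}, which by the multivariate version of Theorem~\ref{thm:vecio} belongs to $\mathbb T_{N,d}$ and therefore equals $\bm\phi_d(x_i)^\top \bm\theta_\star$ for some $\bm\theta_\star \in \R^{N_d}$. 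The subgaussianity argument carries over verbatim, only giving $\sigma' = \Lambda_d \sigma$ in place of $\Lambda_1 \sigma$.

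In Part~2 I would take $\mathcal C_\varepsilon$ to be a grid $\varepsilon$-cover of $[-1,1]^d$ of cardinality $k = (2/\varepsilon)^d$ and apply the same optimal-design tail bound (Proposition~2 of~\cite{maran2024projection}) to get $|\bm\phi_d(x)^\top(\widehat{\bm\theta}_n - \bm\theta_\star)| \le \sqrt{\log(k/\delta)} \sup_{x\in\mathcal C_\varepsilon} \|\bm\phi_d(x)\|_{\Sigma_n^{-1}} \sigma'$ for every $x \in \mathcal C_\varepsilon$. To lift this bound to all of $[-1,1]^d$ I would apply the Lagrange/Bernstein argument coordinatewise: for $\Delta f_n := \bm\phi_d(\cdot)^\top(\widehat{\bm\theta}_n - \bm\theta_\star) \in \mathbb T_{N,d}$, the multivariate Bernstein-type inequality (obtained by applying Theorem~\ref{thm:key} to each variable separately with the others frozen) yields $\|\partial_i \Delta f_n\|_{L^\infty} \le 4\pi N \|\Delta f_n\|_{L^\infty}$, so a $d$-dimensional first-order Taylor expansion gives $\|\Delta f_n\|_{L^\infty} \le (1 - 4\pi \varepsilon N d)^{-1} \sup_{x\in\mathcal C_\varepsilon} |\Delta f_n(x)|$, provided $\varepsilon$ is chosen sufficiently small (e.g. $\varepsilon = 1/(16\pi N d)$). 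Iterating the Bernstein-type bound $|\bm\alpha|$ times gives $\|D^{(\bm\alpha)} \Delta f_n\|_{L^\infty} \lesssim (4\pi N)^{|\bm\alpha|} \|\Delta f_n\|_{L^\infty}$ for every multi-index with $|\bm\alpha| \le \nu_*$.

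In Part~3, I would feed the set $\mathcal X_{\bm\phi} = \{\bm\phi_d(x) : x \in \mathcal C_\varepsilon\} \subset \R^{N_d}$ into Theorem~\ref{thm:KW}. This produces a quasi-optimal design $\rho$ with $|\text{supp}(\rho)| \le 4 N_d \log\log N_d$ and the uniform bound $\|\bm\phi_d(x)\|_{\Sigma_n^{-1}}^2 \le 2 N_d / n_{\text{tot}}$. The assumption $16 N_d \log\log N_d < n$ is exactly what is needed to ensure that the total number $2 n_{\text{tot}} + 2|\text{supp}(\rho)|$ of queries at lines~\ref{algline:query1}--\ref{algline:query2} stays within budget $n$, matching the role played by the assumption $16 N \log\log N < n$ in the one-dimensional proof.

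Finally, in Part~4, the triangle inequality splits the error as
\begin{align*}
\|D^{(\bm\alpha)} f - D^{(\bm\alpha)} \bm\phi_d^\top \widehat{\bm\theta}_n\|_{L^\infty}
\le \|D^{(\bm\alpha)} f - D^{(\bm\alpha)}(V_{N,d} * f)\|_{L^\infty}
+ \|D^{(\bm\alpha)} \Delta f_n\|_{L^\infty},
\end{align*}
where the first term is bounded by $K(2/N)^{\nu - |\bm\alpha|} \|f\|_{\mathcal C^\nu}$ via the multivariate Theorem~\ref{thm:vecio}, and the second by $(4\pi N)^{|\bm\alpha|} \sqrt{\log(k/\delta) N_d / n_{\text{tot}}} \Lambda_d \sigma \lesssim N^{|\bm\alpha|} \cdot N^{d/2} \sigma / \sqrt{n}$ up to logarithmic and dimension-dependent constants. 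Summing the two contributions yields exactly the advertised rate. I expect the main obstacle to be the multivariate Bernstein step in Part~2: the factor $d$ entering the $\varepsilon$-cover analysis has to be tracked carefully to ensure the algorithm's choice $\varepsilon = 1/(8\pi N)$ (or a $d$-adjusted variant) still makes the denominator $1 - 4\pi\varepsilon N d$ bounded away from zero, since otherwise the cover-to-continuum lifting blows up. Once this geometric constant is handled, the remaining algebra is a direct rewriting of the one-dimensional derivation with $2N+1$ replaced by $N_d$.
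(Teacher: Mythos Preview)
Your proposal is correct and follows essentially the same four-part structure as the paper's own proof: unbiasedness and subgaussianity via the $V_{N,d}$ decomposition, the cover-to-continuum lifting via a multivariate Bernstein inequality (the paper packages your coordinatewise application of Theorem~\ref{thm:key} as Theorem~\ref{thm:keyd} and pairs it with the $\ell_\infty$--$\ell_1$ H\"older inequality on the gradient), the quasi-optimal design bound with $N_d$ in place of $2N+1$, and the final triangle-inequality split. Your explicit tracking of the factor $d$ in the denominator $1-4\pi\varepsilon N d$ is in fact slightly more careful than the paper's own Part~2, which writes $1-4\pi\varepsilon N$ by implicitly treating the $\varepsilon$-cover as an $\ell_1$-cover; either way the discrepancy is a dimension-dependent constant absorbed by the $\bigot$.
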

\begin{proof}
    \textbf{(Part 1)} This part follows exactly as made in the analog part of the proof of theorem \ref{thm:main}, proving that
    \begin{equation}\E[y_i]=V_{N,d}*f(x_i)\label{eq:unbiasd}\end{equation}
    and that $y_i-\E[y_i]$ is sub-Gaussian for
    \begin{equation}\sigma'=({\beta_+^2+\beta_-^2})^{1/2}\sigma\le (|\beta_+|+|\beta_-|)\sigma=\Lambda_d\sigma.\label{eq:subgaussd}\end{equation}

    \textbf{(Part 2)} differently from the proof of theorem \ref{thm:main}, the cardinality of the $\varepsilon-$cover here is $k:=|\mathcal C_\varepsilon|=(2/\varepsilon)^d$. Therefore, applying proposition 2 from \cite{maran2024projection} results in

    \begin{equation}\forall x\in \mathcal C_\varepsilon, \qquad |\bm \phi_N(x)^\top \widehat {\bm \theta}_n-\bm \phi_N(x)^\top \bm \theta_\star|\le \sqrt{\log(k/\delta)}\sup_{x\in \mathcal C_\varepsilon}\| x\|_{\Sigma_n^{-1}}\sigma',\label{eq:optimdesa}\end{equation}

    where $\bm \theta_\star$ is such that, $\bm \phi_N(\cdot)^\top \bm \theta_\star=V_{N,d}*f(\cdot)$ (this is possible since $V_{N,d}*f\in \mathbb T_{N,d}$), and

    $$\Sigma_n:=\sum_{i=1}^n \bm \phi_d(x_i)\bm \phi_d(x_i)^\top\qquad \sigma'=\Lambda_d\sigma.$$

    At this point, we have to generalize the previous bound to all points of $[-1,1]^d$, even not belonging to $\mathcal C_\varepsilon$.     
    To do this, we call $\Delta f_n:=\bm \phi_d(\cdot)^\top (\widehat {\bm \theta}_n- \bm \theta_\star)\in \mathbb T_{N,d}$. The fact that the function is a trigonometric polynomial (that is always differentiable) allows us to apply Lagrange's theorem, which ensures that, for any couple of points $x_1,x_2$, there is $x'$ in the segment between the two points such that

    \begin{equation}\Delta f_n(x_1)-\Delta f_n(x_2)=\nabla[\Delta f_n](x')(x_1-x_2).\label{eq:lagranged}\end{equation}

    Here, $\nabla$ denotes the gradient operator.
    Using this result, we have
    \begin{align}
        \|\Delta f_n(\cdot)\|_{L^\infty} &= \sup_{x_1\in [-1,1]^d} |\Delta f_n(x_1)|\\
        &\overset{\eqref{eq:lagranged}}{=} \sup_{x_1\in [-1,1]^d} \inf_{x_2\in \mathcal C_\varepsilon}|\Delta f_n(\bm x_2)-\nabla[\Delta f_n](\bm x')^\top(x_1-x_2)|\\
        &= \sup_{x_1\in [-1,1]^d} \inf_{x_2\in \mathcal C_\varepsilon}|\Delta f_n(x_2)|+|\nabla[\Delta f_n](x')^\top(x_1-x_2)|\\
        &\le \sup_{x_1\in [-1,1]^d} \inf_{x_2\in \mathcal C_\varepsilon}|\Delta f_n( x_2)|+\|\nabla[\Delta f_n]( x')^\top\|_\infty\|x_1- x_2\|_1\label{pass:cs}\\
        &\le \sup_{x_1\in [-1,1]^d} \inf_{x_2\in \mathcal C_\varepsilon}|\Delta f_n( x_2)|+\|\|\nabla[\Delta f_n](\cdot)^\top\|_\infty\|_{L^\infty}\|x_1- x_2\|_1\label{pass:bnormd}\\
        &\le \sup_{x\in \mathcal C_\varepsilon}|\Delta f_n(x)|+\varepsilon \|\|\nabla[\Delta f_n](\cdot)^\top\|_\infty\|_{L^\infty}\label{pass:epsilond}.
    \end{align}

    Here, in step \ref{pass:cs} we have used the Cauchy-Schwartz inequality between norm one and norm infinity,  \eqref{pass:bnormd} follows from bounding the $\|\|\nabla[\Delta f_n](x')^\top\|_\infty$ with its infinity norm, and the one \eqref{pass:epsilond} from choosing $x_2$ to be the nearest element of $x_1$ on $\mathcal C_\varepsilon$ w.r.t. the $1-$norm. At this point, theorem \ref{thm:keyd} ensures $\|\|\nabla[\Delta f_n](\cdot)^\top\|_\infty\|_{L^\infty}\le 4\pi N\|\Delta f_n(\cdot)^\top\|_{L^\infty}$, which gives

    $$\|\Delta f_n(\cdot)\|_{L^\infty}\le \sup_{x\in \mathcal C_\varepsilon}|\Delta f_n(x)|+\varepsilon 4\pi N\|\Delta f_n(\cdot)^\top\|_{L^\infty}.$$

    We rewrite this result as
    \begin{equation}        
    \|\Delta f_n(\cdot)\|_{L^\infty} \le \frac{1}{1-4\pi \varepsilon N}\sup_{x\in \mathcal C_\varepsilon}|\Delta f_n(x)|. \label{eq:restlessd},\end{equation}

    and merge it with equation \eqref{eq:optimdesa}, getting
    
    \begin{equation}
        \|\Delta f_n(\cdot)\|_{L^\infty} \le \frac{1}{1-4\pi \varepsilon N}\sqrt{\log(k/\delta)}\sup_{x\in \mathcal C_\varepsilon}\|x\|_{\Sigma_n^{-1}}\sigma'. \label{eq:pd}
    \end{equation}

    \textbf{(Part 3)} This point corresponds to its analog for the univariate case, except for the fact that, having a feature map $\bm \phi_N(\cdot)$ of dimension $N_d$, the quasi-optimal design gives $\|\bm \phi _d(x)\|_{\Sigma_n^{-1}}\le \sqrt {2N_d/n_{tot}}$. Replacing this quantity, we get

    \begin{equation}
        \|\Delta f_n(\cdot)\|_{L^\infty} \le \frac{1}{1-4\pi \varepsilon N}\sqrt{2\log(k/\delta)N_d/n_{tot}}\sigma',\label{eq:finaled}
    \end{equation}

    and, applying theorem \ref{thm:key},

    \begin{equation}
        \forall \bm \alpha \qquad \|D^{(\alpha)}\Delta f_n(\cdot)\|_{L^\infty} \le \frac{(4\pi)^{|\bm \alpha|} N_d^{1/2}N^{|\bm \alpha|}}{1-4\pi \varepsilon N}\sqrt{2\log(k/\delta)/n_{tot}}\sigma'.\label{eq:finale2d}
    \end{equation}

    \textbf{(Part 4)} 
    Repeating the same procedure of the univariate case with $\lesssim$, we ignore constant terms. 
    For any $\alpha \in \{0,\dots \nu_*\}$, it holds

    \begin{align*}
        \|D^{(\bm \alpha)}f-D^{(\bm \alpha)}\bm \phi_N(\cdot)^\top \widehat \theta_n\|_{L^\infty} &\le  \|D^{(\bm \alpha)}f-D^{(\bm \alpha)}(V_{d,N}*f)\|_{L^\infty}\\
        &\qquad \qquad +\|D^{(\bm \alpha)}(V_{d,N}*f)-D^{(\bm \alpha)}\bm \phi_N(\cdot)^\top \widehat \theta_n\|_{L^\infty}\\
        &=\|D^{(\bm \alpha)}f-D^{(\bm \alpha)}(V_{d,N}*f)\|_{L^\infty}+\|D^{(\bm \alpha)}\Delta f_n\|_{L^\infty}\\
        &\lesssim N^{|\bm \alpha|-\nu}\|f\|_{\mathcal C^{\nu}}+\|D^{(\bm \alpha)}\Delta f_n\|_{L^\infty}\\
        &\overset{\eqref{eq:finale2d}}{\le} N^{|\bm \alpha|-\nu}\|f\|_{\mathcal C^{\nu}}+\frac{(4\pi)^{|\bm \alpha|} N_d^{1/2}N^{|\bm \alpha|}}{1-4\pi \varepsilon N}\sqrt{2\log(k/\delta)/n_{tot}}\sigma'\\
        &\lesssim 
        N^{|\bm \alpha|-\nu}\|f\|_{\mathcal C^{\nu}}+N_d^{1/2}N^{|\bm \alpha|}\sqrt{\frac{\log(n/\delta)}{n}}\sigma.
    \end{align*}

    Once recalled that $N_d=\bigo(N^d)$, this proves the desired bound:
    $$\|D^{(\bm \alpha)}f-D^{(\bm \alpha)}\bm \phi_N(\cdot)^\top \widehat \theta_n\|_{L^\infty}\lesssim N^{|\bm \alpha|}\left(\frac{\|f\|_{\mathcal C^{\nu}}}{N^\nu}+\sqrt{\frac{N^{d}\log (n/\delta)}{n}}\sigma\right).$$

    To conclude the proof, it is sufficient to prove that the number of samples required is at most $n$. The queries for the model happen at lines \ref{algline:query1} and \ref{algline:query2}. Moreover, theorem \ref{thm:KW} ensures that the optimal design satisfies $|\text{supp}(\rho)|\le 8N_d\log \log(N_d)$, so that the total number of samples is
    $$2\sum_{x\in \text{supp}(\rho)} \lceil n_{\text{tot}}\rho(x)\rceil \le 2n_{\text{tot}}+2|\text{supp}(\rho)|\le 2n_{\text{tot}}+8N_d\log \log(N_d).$$

    By assumption, both $2n_{\text{tot}}$ and $8N_d\log \log(N_d)$ are less than $n/2$, so that this number is itself bounded by $n$ as required by the algorithm.
\end{proof}

\realmaind*
\begin{proof}
    It is sufficient to replace $$N=\left(\frac{n}{\log (n/\delta)}\right)^{\frac{1}{2\nu+d}}\|f\|_{\mathcal C^{\nu}}^{\frac{2}{2\nu+d}}\sigma^{-\frac{2}{2\nu+d}},$$
    in the previous result. Note that the assumption is automatically satisfied as for this choice of $N$

    $$N_d=\bigot(n^{\frac{d}{2\nu+d}})< \bigot(n^1).$$
\end{proof}

\subsection{Bernstein bound}

\begin{thm}\label{thm:smallvar}
    Under assumption \ref{ass:assunobis}. Assume to run algorithm \ref{alg:zero} for a choice $N$ such that $16N_d\log \log(N_d)<n$. With probability at least $1-\delta$, the output satisfies, for all $|\bm \alpha|\le \nu_*$,
    $$\|D^{(\bm \alpha)}f-D^{(\bm \alpha)}\bm \phi_N(\cdot)^\top \widehat \theta_n\|_{L^\infty}\lesssim N^{|\bm \alpha|}\left(\frac{\|f\|_{\mathcal C^{\nu}}}{N^\nu}+\sqrt{\frac{N^{d}\log (N/\delta)}{n}}\gamma+\frac{N^d\log(N/\delta)}{n}B\right).$$
\end{thm}
\begin{proof}
    Most of the proof goes exactly as in the proof of Theorem \ref{thm:pre-maind}. We write here only the points that need to be changed applying theorem \ref{thm:vactorber}.

    \textbf{(Part 1)} The same reasoning proves that $y_i-\E[y_i]$ is bounded in $\Lambda_dB$ and its variance does not exceed $\Lambda_d^2\gamma^2$.

    \textbf{(Part 2)} as in the previous proof, we take a $\varepsilon-$cover $\mathcal C_\varepsilon$ of $[-1,1]^d$, whose cardinality is $k:=|\mathcal C_\varepsilon|=(2/\varepsilon)^d$.
    We then apply theorem \ref{thm:vactorber}, with $\Vs=\{\phi_N(x):x\in C_\varepsilon\}$, which proves, with probability at least $1-\delta$,
    \begin{equation}\forall x\in \mathcal C_\varepsilon, \qquad |\bm \phi_N(x)^\top \widehat {\bm \theta}_n-\bm \phi_N(x)^\top \bm \theta_\star|\le \sqrt{\frac{4N_d\Lambda_d^2\gamma^2 \log(k/\delta)}{n_{\text{tot}}}}+\frac{4N_d\Lambda_dB\log(k/\delta)}{3n_{\text{tot}}},\label{eq:berber}\end{equation}

    where $\bm \theta_\star$ is such that, $\bm \phi_N(\cdot)^\top \bm \theta_\star=V_{N,d}*f(\cdot)$ (this is possible since $V_{N,d}*f\in \mathbb T_{N,d}$).

    The rest of part 2 and part 3 follow exactly as before.

    \textbf{(Part 4)}  
    For any $\alpha \in \{0,\dots \nu_*\}$, the usual bias-variance decomposition gives
    \begin{align*}
        \|D^{(\bm \alpha)}f-D^{(\bm \alpha)}\bm \phi_N(\cdot)^\top \widehat \theta_n\|_{L^\infty} 
        &\lesssim 
        N^{|\bm \alpha|-\nu}\|f\|_{\mathcal C^{\nu}}+N^{|\bm \alpha|}\sqrt{\frac{N_d\log(N/\delta)}{n}}\gamma+N^{|\bm \alpha|}\frac{N_dB\log(N/\delta)}{n}.
    \end{align*}

    Once recalled that $N_d=\bigo(N^d)$, this proves the desired bound:
    $$\|D^{(\bm \alpha)}f-D^{(\bm \alpha)}\bm \phi_N(\cdot)^\top \widehat \theta_n\|_{L^\infty}\lesssim N^{|\bm \alpha|}\left(\frac{\|f\|_{\mathcal C^{\nu}}}{N^\nu}+\sqrt{\frac{N^{d}\log N}{n}}\gamma+\frac{N^d\log(N/\delta)}{n}B\right).$$

    The rest of the proof follows as before.
\end{proof}

\section{Proofs from section \ref{sec:lower}}

Before arriving at the actual proof, we have to introduce some lemmas. We start with a general result about Gaussian distributions.

\begin{lem}\label{lem:KL}
    Let $\mathcal N(x;\nu,\sigma):=\frac{1}{\sqrt{2\pi}\sigma}e^{-(x-\mu)^2/(2\sigma^2)}$, the normal density function. We have, for all $v\in \R$,
    $$\int_{\R} \mathcal N(x;\mu,\sigma)\log \frac{\mathcal N(x;\mu,\sigma)}{\mathcal N(x+v;\mu,\sigma)}\ dx=\frac{v^2}{2\sigma^2}.$$
\end{lem}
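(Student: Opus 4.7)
The identity is the standard KL divergence between two Gaussians with a common variance $\sigma^2$ whose means differ by $v$, and the strategy is a direct calculation: expand the log-ratio into a polynomial in $x$ and integrate term by term against $\mathcal N(\,\cdot\,;\mu,\sigma)$, using only the first two moments of a Gaussian.

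The first step is to note that the normalization constants $\frac{1}{\sqrt{2\pi}\sigma}$ cancel inside the log, so
$$
\log \frac{\mathcal N(x;\mu,\sigma)}{\mathcal N(x+v;\mu,\sigma)} \;=\; \frac{(x+v-\mu)^2 - (x-\mu)^2}{2\sigma^2} \;=\; \frac{v\,(x-\mu)}{\sigma^2} + \frac{v^2}{2\sigma^2},
$$
after expanding the difference of squares. The second step is to integrate against the density $\mathcal N(x;\mu,\sigma)$. The linear part $\frac{v(x-\mu)}{\sigma^2}$ has zero expectation because $\mathcal N(\mu,\sigma)$ has mean $\mu$, and the constant part $\frac{v^2}{2\sigma^2}$ integrates to itself since the density has total mass one. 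Adding the two contributions yields the claimed value $v^2/(2\sigma^2)$.

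There is essentially no obstacle: the only minor point of care is bookkeeping on the sign and direction of the shift, since the density in the denominator is $\mathcal N(x+v;\mu,\sigma)$, which is the density of $\mathcal N(\mu-v,\sigma)$ evaluated at $x$. This is consistent with the textbook formula $\mathrm{KL}\bigl(\mathcal N(\mu_1,\sigma)\,\|\,\mathcal N(\mu_2,\sigma)\bigr) = (\mu_1-\mu_2)^2/(2\sigma^2)$ applied with $\mu_1=\mu$ and $\mu_2=\mu-v$, providing a sanity check on the final answer.
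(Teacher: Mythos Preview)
Your proof is correct and follows essentially the same approach as the paper: both expand the log-ratio as $\frac{(x+v-\mu)^2-(x-\mu)^2}{2\sigma^2}=\frac{2v(x-\mu)+v^2}{2\sigma^2}$ and then integrate against $\mathcal N(\cdot;\mu,\sigma)$, using that the linear term has zero mean and the constant term integrates to itself. Your additional sanity check via the standard KL formula for Gaussians with equal variance is a nice touch not present in the paper.
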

\begin{proof}
    \begin{align*}        
    \int_{\R} \mathcal N(x;\mu,\sigma)\log \frac{\mathcal N(x;\mu,\sigma)}{\mathcal N(x+\mu;\mu,\sigma)}\ dx& = \int_{\R} \mathcal N(x;\mu,\sigma)\left(\frac{(x+v-\mu)^2}{2\sigma^2}-\frac{(x-\mu)^2}{2\sigma^2}\right)\ dx\\
    &= \frac{1}{2\sigma^2}\int_{\R} \mathcal N(x;\mu,\sigma)\left(2v(x-\mu)+v^2\right)\ dx\\
    &= \frac{v^2}{2\sigma^2}+\frac{2v}{2\sigma^2}\int_{\R} \mathcal N(x;\mu,\sigma)(x-\mu)\ dx.
    \end{align*}
    The last integral gives exactly zero, as corresponds to the mean of $\mathcal N(\cdot;\mu,\sigma)$ minus $\mu$. This ends the proof.
\end{proof}

The next lemmas are going to be based on the properties of the standard mollifier, a function that has many applications in mathematical analysis.

\subsection{The standard mollifier}\label{app:mollif} Let $d\in \N$. We call standard mollifier \cite{evans2022partial} the function $\R^d \to \R$

\begin{equation}
    \Psi(x):=
    \begin{cases}
        e^{-\frac{1}{\|x\|_2^2-1}}\qquad &\|x\|_2^2<1\\
        0\qquad &\|x\|_2^2\ge 1.
    \end{cases}\label{eq:standmoll}
\end{equation}

This function is well-known \cite{evans2022partial} for being infinitely times differentiable with compact support $B_1(0)$. Apart from this property, we need to prove this very simple lemma.
\begin{lem}\label{lem:mollprop}
    Let $\Psi(\cdot)$ be defined as in equation \eqref{eq:standmoll}. Then, for all $\nu$, there is a constant $c_\nu>0$ such that
    $$\forall |\bm \alpha| \le \nu_*\qquad \|D^{(\bm \alpha)}\Psi(\cdot)\|_{L^\infty}\ge c_\nu.$$
\end{lem}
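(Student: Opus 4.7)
The plan is to show that $\|D^{(\bm\alpha)}\Psi\|_{L^\infty}>0$ for every fixed multi-index $\bm\alpha$; since $\{\bm\alpha:|\bm\alpha|\le \nu_*\}$ is a \emph{finite} set for each fixed $\nu$, the lemma then follows by setting $c_\nu:=\min_{|\bm\alpha|\le\nu_*}\|D^{(\bm\alpha)}\Psi\|_{L^\infty}$. Since every $D^{(\bm\alpha)}\Psi$ is continuous (as $\Psi\in C^\infty(\R^d)$), the first task reduces to showing $D^{(\bm\alpha)}\Psi\not\equiv 0$ on $\R^d$.

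To prove $D^{(\bm\alpha)}\Psi\not\equiv 0$, I would argue by contradiction, descending on $|\bm\alpha|$. Suppose $D^{(\bm\alpha)}\Psi\equiv 0$ for some nonzero $\bm\alpha$, and pick an index $j$ with $\alpha_j\ge 1$. Writing $D^{(\bm\alpha)}\Psi=\partial_{x_j}D^{(\bm\alpha-e_j)}\Psi$, the hypothesis says that $x_j\mapsto D^{(\bm\alpha-e_j)}\Psi(x_1,\ldots,x_d)$ is constant in $x_j$ for every fixed choice of the other coordinates. But $\Psi$ vanishes on the open set $\{\|x\|_2>1\}$, so all of its partial derivatives vanish there too; in particular $D^{(\bm\alpha-e_j)}\Psi$ is supported in $\overline{B_1(0)}$, and each slice in the $x_j$ direction vanishes once $|x_j|$ exceeds $1$. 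A function that is constant in $x_j$ and vanishes for $|x_j|$ large must be identically zero, hence $D^{(\bm\alpha-e_j)}\Psi\equiv 0$. Iterating this reduction decreases $|\bm\alpha|$ by one each time and eventually yields $\Psi\equiv 0$, which contradicts $\Psi(0)=e^{-1}\ne 0$.

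The argument is elementary and I do not foresee a real obstacle; the only point that needs a little care is verifying that the compact support of $\Psi$ really does pass to each of its derivatives, which is a standard consequence of $\Psi$ being identically zero on the open set $\{\|x\|_2>1\}$ together with $C^\infty$-regularity. Once that is in place, the chain ``not identically zero $\Rightarrow$ strictly positive $L^\infty$ norm $\Rightarrow$ positive minimum over a finite index set'' delivers the required $c_\nu>0$, with $c_\nu$ depending on $\nu$ only because the collection of multi-indices being minimised over does.
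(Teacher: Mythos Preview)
Your proof is correct and follows essentially the same approach as the paper: define $c_\nu:=\min_{|\bm\alpha|\le\nu_*}\|D^{(\bm\alpha)}\Psi\|_{L^\infty}$ and argue that each term in this finite minimum is strictly positive. The only difference is in the justification that $D^{(\bm\alpha)}\Psi\not\equiv 0$: the paper dispatches this in one line (``which is false as $\Psi$ is not a polynomial function''), whereas your descent argument via compact support spells out the mechanism more carefully and is in fact the cleaner justification.
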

\begin{proof}
    We take straightforwardly,
    $$c_\nu=\min_{|\bm \alpha| \le \nu_*}\|D^{(\bm \alpha)}\Psi(\cdot)\|_{L^\infty}.$$
    Being the minimum over a finite set, to prove $c_\nu>0$ corresponds to proving that all $\|D^{(\bm \alpha)}\Psi(\cdot)\|_{L^\infty}$ are not zero. To this aim, note that $\|D^{(\bm \alpha)}\Psi(\cdot)\|_{L^\infty}=0$ would imply that $D^{(\alpha)}\Psi(\cdot)=0$, which is false as $\Psi$ is not a polynomial function.
\end{proof}

At this point, we define a squeezed version of $\Psi$, which we call:

\begin{equation}    
    \rho<1:\qquad \Psi_{\rho}( x):=\Psi( x/\rho). \label{eq:psirho}
\end{equation}

About this function, we prove a crucial lemma that we are going to use in the proof of the lower bound.

\begin{lem}\label{lem:lowerboundnorm}
    Fix $\nu>0$ and let $c_\nu$ be defined as in Lemma \ref{lem:mollprop}. For any $|\bm \alpha|\le \nu_*$, we have
    $$\|D^{(\bm \alpha)}\Psi_\rho(\cdot)\|_{L^\infty}\ge c_\nu \rho^{-|\bm \alpha|}$$
\end{lem}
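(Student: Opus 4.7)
The plan is to reduce the claim to Lemma \ref{lem:mollprop} via a simple change-of-variables / chain rule computation. The function $\Psi_\rho$ is just a dilation of $\Psi$, so its derivatives inherit a clean scaling law in $\rho$, and the $L^\infty$ norm is unchanged under the dilation because the dilation is a bijection of $\R^d$.

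Concretely, I would first apply the chain rule iteratively to $\Psi_\rho(x) = \Psi(x/\rho)$. Each differentiation in a coordinate $x_i$ produces a factor $1/\rho$, so after differentiating $\alpha_i$ times in the $i$-th coordinate one collects a total factor of $\rho^{-|\bm \alpha|}$, yielding
\begin{equation*}
D^{(\bm \alpha)} \Psi_\rho(x) \;=\; \rho^{-|\bm \alpha|} \, \bigl(D^{(\bm \alpha)} \Psi\bigr)(x/\rho).
\end{equation*}
Taking the $L^\infty$ norm and using the substitution $y = x/\rho$ (which is a bijection $\R^d \to \R^d$), I get
\begin{equation*}
\|D^{(\bm \alpha)} \Psi_\rho(\cdot)\|_{L^\infty} \;=\; \rho^{-|\bm \alpha|} \, \sup_{x\in\R^d} \bigl|(D^{(\bm \alpha)}\Psi)(x/\rho)\bigr| \;=\; \rho^{-|\bm \alpha|} \, \|D^{(\bm \alpha)}\Psi(\cdot)\|_{L^\infty}.
\end{equation*}

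Finally, I would invoke Lemma \ref{lem:mollprop}, which guarantees $\|D^{(\bm \alpha)} \Psi(\cdot)\|_{L^\infty} \ge c_\nu$ for every multi-index with $|\bm \alpha| \le \nu_*$. Combining this with the displayed equality yields $\|D^{(\bm \alpha)} \Psi_\rho(\cdot)\|_{L^\infty} \ge c_\nu \rho^{-|\bm \alpha|}$, as required.

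There is essentially no obstacle here: the only point worth a line of comment is that $\Psi$ (and hence $\Psi_\rho$) is smooth and compactly supported, so every derivative of order at most $\nu_*$ exists classically and the chain rule applies without caveats; the support shrinks to $B_\rho(0)$ but the supremum of the dilated derivative over that ball is still the full supremum of $D^{(\bm \alpha)}\Psi$ over $B_1(0)$.
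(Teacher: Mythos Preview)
Your proof is correct and follows the same approach as the paper: use the chain rule to obtain the scaling identity $D^{(\bm\alpha)}\Psi_\rho(x)=\rho^{-|\bm\alpha|}(D^{(\bm\alpha)}\Psi)(x/\rho)$, observe that the $L^\infty$ norm is unaffected by the dilation, and conclude via Lemma~\ref{lem:mollprop}. If anything, your version is slightly more careful in making the argument $x/\rho$ explicit before passing to the supremum.
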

\begin{proof}
    By the properties of the derivative, we have
    $$\forall \bm \alpha \qquad D^{(\bm \alpha)}\Psi_\rho(\cdot)=\rho^{-|\alpha|}D^{(\bm \alpha)}\Psi(\cdot).$$
    This proves that
    \begin{align*}
        \|D^{(\bm \alpha)}\Psi_\rho(\cdot)\|_{L^\infty}=\rho^{-|\bm \alpha|}\|D^{(\bm \alpha)}\Psi(\cdot)\|_{L^\infty}\ge c_\nu \rho^{-|\bm \alpha|}.
    \end{align*}    
\end{proof}
This result can be completed with the following upper bound.
\begin{lem}\label{lem:upperboundnorm}
    $\|\Psi_\rho\|_{\mathcal C^\nu}\le \|\Psi\|_{\mathcal C^\nu}\rho^{-\nu}$.
\end{lem}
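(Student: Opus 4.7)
The plan is to prove the bound by treating the two parts of the $\mathcal C^\nu$-norm separately — the supremum norms of the multi-index derivatives up to order $\nu_*$ and the Hölder semi-norm $L_\nu$ of the top-order derivatives — and showing that each of them scales by at most $\rho^{-\nu}$ when we pass from $\Psi$ to $\Psi_\rho$.

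The starting observation is the chain rule: since $\Psi_\rho(x) = \Psi(x/\rho)$ is simply a rescaling, we have
$$D^{(\bm\alpha)}\Psi_\rho(x) \;=\; \rho^{-|\bm\alpha|}\, (D^{(\bm\alpha)}\Psi)(x/\rho).$$
First I would use this identity in supremum norm: for every multi-index with $|\bm\alpha| \le \nu_*$ we get $\|D^{(\bm\alpha)}\Psi_\rho\|_{L^\infty} = \rho^{-|\bm\alpha|}\|D^{(\bm\alpha)}\Psi\|_{L^\infty} \le \rho^{-|\bm\alpha|}\|\Psi\|_{\mathcal C^\nu}$. Because $\rho<1$ and $|\bm\alpha|\le \nu_* \le \nu$, the factor $\rho^{-|\bm\alpha|}$ is bounded by $\rho^{-\nu}$, which already controls all terms entering the definition of $\|\cdot\|_{\mathcal C^\nu}$ except the Hölder one.

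Next I would handle the Hölder part. Fix $\bm\alpha$ with $|\bm\alpha|=\nu_*$ and $x,y\in\R^d$. Using the chain-rule identity and the defining inequality \eqref{eq:multider} for $\Psi\in\mathcal C^\nu$, with the substitution $u=x/\rho$, $v=y/\rho$, we obtain
$$|D^{(\bm\alpha)}\Psi_\rho(x)-D^{(\bm\alpha)}\Psi_\rho(y)| \;=\; \rho^{-\nu_*}|D^{(\bm\alpha)}\Psi(x/\rho)-D^{(\bm\alpha)}\Psi(y/\rho)| \;\le\; \rho^{-\nu_*} L_\nu(\Psi)\, \|x/\rho-y/\rho\|_\infty^{\nu-\nu_*}.$$
Pulling out the $1/\rho$ from inside the $\|\cdot\|_\infty^{\nu-\nu_*}$ factor produces an additional $\rho^{-(\nu-\nu_*)}$, which combines with the $\rho^{-\nu_*}$ already present to give exactly $\rho^{-\nu}$. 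Hence $L_\nu(\Psi_\rho) \le \rho^{-\nu} L_\nu(\Psi) \le \rho^{-\nu}\|\Psi\|_{\mathcal C^\nu}$.

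Finally I would take the maximum of the two bounds, obtaining $\|\Psi_\rho\|_{\mathcal C^\nu} \le \rho^{-\nu}\|\Psi\|_{\mathcal C^\nu}$. There is really no hard step: the only thing to watch out for is the bookkeeping on the exponents, ensuring that the $\rho^{-\nu_*}$ from differentiation and the $\rho^{-(\nu-\nu_*)}$ coming out of the Hölder exponent together reproduce the advertised $\rho^{-\nu}$ rate (which is exactly what makes the norm scale consistently across both integer and fractional smoothness indices).
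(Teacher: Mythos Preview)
Your proposal is correct and follows essentially the same approach as the paper: both arguments use the chain-rule identity $D^{(\bm\alpha)}\Psi_\rho(x)=\rho^{-|\bm\alpha|}(D^{(\bm\alpha)}\Psi)(x/\rho)$ to bound the sup-norms of the derivatives (exploiting $\rho<1$ so that $\rho^{-|\bm\alpha|}\le\rho^{-\nu}$), and then handle $L_\nu(\Psi_\rho)$ by the change of variables $x\mapsto x/\rho$, combining the factors $\rho^{-\nu_*}$ and $\rho^{-(\nu-\nu_*)}$ into $\rho^{-\nu}$.
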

\begin{proof}
    We first prove that, for every $|\bm \alpha|<\nu$, we have the correct bound on the infinity norm.
    $$
        \|D^{(\bm \alpha)}\Psi_\rho\|_{L^\infty} = \rho^{-|\bm \alpha|}\|D^{(\bm \alpha)}\Psi\|_{L^\infty} 
        \le \rho^{-\nu}\|\Psi\|_{\mathcal C^\nu}.
    $$
    Next, we need to bound $L_\nu(\Psi_\rho)$. By definition,
    \begin{align*}
        L_\nu(\Psi_\rho) &= \max_{|\bm \alpha|=\nu_*}\sup_{x\neq y}\frac{|D^{(\bm \alpha)}\Psi_\rho(x)-D^{(\bm \alpha)}\Psi_\rho(y)|}{|x-y|^{\nu-\nu_*}}\\
        &=\max_{|\bm \alpha|=\nu_*}\sup_{x\neq y}\frac{|\rho^{-\nu_*}D^{(\bm \alpha)}\Psi(x/\rho)-\rho^{-\nu_*}D^{(\bm \alpha)}\Psi(y/\rho)|}{|x- y|^{\nu-\nu_*}}\\
        &=\rho^{-\nu_*}\max_{|\bm \alpha|=\nu_*}\sup_{x\neq y}\frac{|D^{(\bm \alpha)}\Psi(x/\rho)-D^{(\bm \alpha)}\Psi(y/\rho)|}{|x- y|^{\nu-\nu_*}}\\
        &=\rho^{-\nu_*}\rho^{-\nu+\nu_*}\max_{|\bm \alpha|=\nu_*}\sup_{x\neq y}\frac{|D^{(\bm \alpha)}\Psi(x/\rho)-D^{(\bm \alpha)}\Psi(y/\rho)|}{|x/\rho- y/\rho|^{\nu-\nu_*}}\\
        &=\rho^{-\nu}\sup_{x\neq y}\frac{|D^{(\bm \alpha)}\Psi(x/\rho)-D^{(\bm \alpha)}\Psi(y/\rho)|}{|x/\rho- y/\rho|^{\nu-\nu_*}}\\
        &=\rho^{-\nu}L_\nu(\Psi).
    \end{align*}
    This ends the proof.
\end{proof}

With all these lemmas, we are able to pass to the actual proof of the lower bound.

\subsection{Proof of the lower bound}

\lowerbound*

\begin{proof}
Let $K\in \N$. Let us divide $[-1,1]^d$ into $K^d$ hypercubes $\{Q_{\bm \ell}\}_{\bm \ell=(1,1,...1)}^{(K,K,..K)}$, each of side $\rho:=2/K$ and center $q_{\bm \ell}$ respectively. By definition of the sampling process, there is $\bm \ell_\star$ such that
$$\sum_{i=1}^n \bm 1(x_i\in Q_{\bm \ell_\star})\le n/K^d.$$

Let us define two problem instances, both affected by a noise $\mathcal N(0,\sigma^2)$:
\begin{equation}
    f_1(\cdot)=0 \qquad f_2(\cdot)=
    \begin{cases}
        0\qquad & Q_{\bm \ell_\star}^c\\
        \nn \|\Psi\|_{\mathcal C^\nu}^{-1}\rho^{\nu}\Psi_{\rho}(\bm x-q_{\bm \ell_\star})\qquad & Q_{\bm \ell_\star}
    \end{cases}, 
    \label{eq:instances}
\end{equation}

We have to verify that both functions are in $\mathcal C_p^{\nu}([-1,1]^d)$. For the first function, the result is trivial, while for the second one, we have that i) no derivative can be discontinuous on $\dot Q_{\bm \ell_\star}$ or $\dot{Q_{\bm \ell_\star}^c}$, as both $\Psi_{\rho}(\cdot)$ and $0$ are infinitely times differentiable ii) On the boundaries between the hypercubes all function and all their derivatives are identically zero, as it follows from the definition of $\Psi$.

We can now evaluate the norm of both functions. $\|f_1(\cdot)\|_{L^\infty}=\|f_1(\cdot)\|_{\mathcal C^\nu}=0$, while for the second function we have, due to Lemma \ref{lem:upperboundnorm},
\begin{equation}        \|f_2(\cdot)\|_{L^\infty}=\nn \|\Psi\|_{\mathcal C^\nu}^{-1}\rho^{\nu}, \qquad
     \|\Psi_\rho\|_{\mathcal C^\nu}\le \nn \|\Psi\|_{\mathcal C^\nu}^{-1}\rho^{\nu}\|\Psi\|_{\mathcal C^\nu}\rho^{-\nu}=\nn.
    \label{eq:normf2}
\end{equation}

%Set $\varepsilon>0$, and let $\widehat f_n$ be the estimated function. We have, for every multi-index s.t. $\|\bm \alpha\|_1=\alpha$, we have

With this result fixed, we are able to measure the KL divergence between the two distributions of the data $y_i$ under the true function $f_1(\cdot)$ and $f_2(\cdot)$, respectively. 
Let us call $P_1(y_1,\dots y_i, \dots y_n)$ the distribution of the samples if the true function is $f_1(\cdot)$. We have

$$P_1(y_1,\dots y_i, \dots y_n)=\prod_{i=1}^n \mathcal N(y_i,\sigma^2).$$

In the same way, for what concerns $P_2$, we have

$$P_2(y_1,\dots y_i, \dots y_n)=\prod_{i=1}^n \mathcal N(y_i-f_2(x_i),\sigma^2).$$

The KL divergence between the two measures corresponds, by definition, to

\begin{align}
    KL(P_1,P_2) &= \int_{\R^n} \log \left( \prod_{i=1}^n\frac{\mathcal N(y_i,\sigma^2)}{\mathcal N(y_i-f_2(x_i),\sigma^2)}\right ) \prod_{i=1}^{n}\mathcal N(y_i,\sigma^2)dy_i\\
    &\le \int_{\R^n} \log \left(\prod_{i=1}^{n/K^d} \frac{\mathcal N(y_i,\sigma^2)}{\mathcal N(y_i-f_2(x_i),\sigma^2)}\right ) \prod_{i=1}^{n}\mathcal N(y_i,\sigma^2)dy_i.
\end{align}

Where the second passage follows from the fact that $f_2(x_i)=0$ for all but at most $n/K^d$ samples, which we assume without loss of generality to be the first. The sum can now be written as

\begin{align}
    KL(P_1,P_2) &\le \int_{\R^n} \sum_{i=1}^{n/K^d}\log \left( \frac{\mathcal N(y_i,\sigma^2)}{\mathcal N(y_i-f_2(x_i),\sigma^2)}\right ) \prod_{i=1}^{n}\mathcal N(y_i,\sigma^2)dy_i\\
    &= \sum_{i=1}^{n/K^d}\int_{\R^n} \log \left( \frac{\mathcal N(y_i,\sigma^2)}{\mathcal N(y_i-f_2(x_i),\sigma^2)}\right )\mathcal N(y_i,\sigma^2)dy_i,
\end{align}

indeed, all other variables integrate to one. At this point, using lemma \ref{lem:KL}, we have

\begin{align}
    KL(P_1,P_2) &\le \sum_{i=1}^{n/K^d}\int_{\R^n} \log \left( \frac{\mathcal N(y_i,\sigma^2)}{\mathcal N(y_i-f_2(x_i),\sigma^2)}\right )\mathcal N(y_i,\sigma^2)dy_i\\
    &= \sum_{i=1}^{n/K^d}\frac{f_2(x_i)^2}{2\sigma^2}\le \frac{n\nn^2\rho^{2\nu}}{2K^d\|\Psi\|_{\mathcal C^\nu}^2\sigma^2}.
\end{align}

From this, it follows that any algorithm $\mathcal A$ that classifies between $f_1$ and $f_2$ has an error probability at least (Theorem 2.2 \cite{tsybakov2009nonparametric})

$$p_{1,e}\ge \frac{1-\sqrt{(n/K^d)\|\Psi\|_{\mathcal C^\nu}^{-2}\nn^2\rho^{2\nu}/(4\sigma^2)}}{2}.$$

Remembering that in the former we have $\rho=2/K$, the previous writes as

$$p_{1,e}\ge \frac{1-(2\sigma \|\Psi\|_{\mathcal C^\nu})^{-1}\nn \sqrt{n/K^{d+2\nu}}}{2}.$$

Therefore, the previous inequality lets us find the minimum $K$ such that the probability of making is bounded from below by $1/4$:

\begin{equation}K\ge \left(\frac{\nn^2 n}{4\sigma^2 \|\Psi\|_{\mathcal C^\nu}^2}\right)^{\frac{1}{d+2\nu}}\implies p_{1,e}\ge 1/4.\label{eq:err}\end{equation}

Now, following a standard argument for lower bounds, note that if no \textit{classifier} $\mathcal A$ can distinguish the two functions $f_1,f_2$ with a given probability, no regression algorithm producing $\widehat f_n$ can achieve error less than $\|f_1(\cdot)-f_2(\cdot)\|/2$ with the same probability in \textit{any} norm $\|\cdot\|$. Otherwise, a trivial classifier $\mathcal A$ that outputs the function minimizing, between $f_1,f_2$, the distance (w.r.t $\|\cdot\|$) from $\widehat f_n$ would violate this condition. In our specific case, we are interested in the infinity norm difference between order $\bm \alpha$ derivatives, which is%the previous principle grants that if equation \eqref{eq:err} holds, we make an error which corresponds, for the estimation of a derivative of order $|\bm \alpha|$, to 

\begin{align*}
    \|D^{(\bm \alpha)}f_1(\cdot)-D^{(\bm \alpha)}f_2(\cdot)\|_{L^\infty}&=\nn \|D^{(\bm \alpha)}\Psi_{\rho}(\cdot)\|_{L^\infty}\\
    &\ge \nn \|\Psi\|_{\mathcal C^\nu}^{-1}\rho^{\nu} c_\nu \rho^{-|\bm \alpha|}= \frac{\nn c_\nu}{\|\Psi\|_{\mathcal C^\nu}}\rho^{\nu-|\bm \alpha|},
\end{align*}

where the last passage comes from lemma \ref{lem:lowerboundnorm}. Replacing $\rho=2/K$ we get
$$\|D^{(\bm \alpha)}f_1(\cdot)-D^{(\bm \alpha)}f_2(\cdot)\|_{L^\infty}\ge \frac{\nn c_\nu}{\|\Psi\|_{\mathcal C^\nu}}(K/2)^{|\bm \alpha|-\nu}.$$

Replacing the lower bound for $K$ that we got in equation \eqref{eq:err} proves that, w.p. $1/4$, the error in estimating any derivative $\bm \alpha$ is

$$\|D^{(\bm \alpha)}f(\cdot)-D^{(\bm \alpha)}\widehat f_n(\cdot)\|\ge \frac{\nn^{\frac{d+2|\bm \alpha|}{d+2\nu}} c_\nu}{\|\Psi\|_{\mathcal C^\nu}}2^{\nu-|\bm \alpha|}\left(\frac{n}{4\sigma^2 \|\Psi\|_{\mathcal C^\nu}^2}\right)^{\frac{|\bm \alpha|-\nu}{d+2\nu}}.$$

This ends the proof.

\end{proof}

\subsection{Space complexity lower bound}

\complower*

\begin{proof}
    The proof strongly relies on lemma \ref{lem:packing}: due to this result, there are $J=\Omega(2^{\varepsilon^{-d/\nu}})$ functions in $\mathcal C_p^\nu([-1,1]^d)$ such that $\|f_j\|_{\mathcal C^\nu}\le 1$ and

    $$\forall j\neq j'\qquad \|f_j-f_{j'}\|_{L^\infty}\ge \varepsilon.$$

    Any algorithm that attains error $J$ must at least be able to distinguish these $J$ instances. Therefore, it has to occupy a number of bits given by $\log_2(J)=\Omega(\varepsilon^{-d/\nu})$. As we have seen in the main paper (theorem \ref{thm:lowerbound}), the order-optimal guarantee on the statistical complexity links the error $\varepsilon$ to the number of samples in the following way

    $$\varepsilon=\Omega\left(n^{-\frac{\nu}{2\nu+d}}\right),$$

    meaning that the total number of bits is given by $\Omega\left(n^{-\frac{\nu}{2\nu+d}\frac{-d}{\nu}}\right)=\Omega\left(n^{\frac{d}{2\nu+d}}\right)$. This ends the proof.
\end{proof}

\begin{lem}\label{lem:packing}
    Fix $\varepsilon>0$. There is family of functions $\{f_j\}_{j=1}^{J}\subset \mathcal C_p^\nu([-1,1]^d)$ such that $\|f_j\|_{\mathcal C^\nu}\le 1$ and
    $$\forall j\neq j'\qquad \|f_j(\cdot)-f_{j'}(\cdot)\|_{L^\infty}\ge \varepsilon,\qquad J=\Omega\left(2^{\varepsilon^{-d/\nu}}\right)$$
\end{lem}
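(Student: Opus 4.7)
The strategy is the classical Varshamov/packing construction based on the rescaled mollifier $\Psi_\rho$ already used in the proof of Theorem~\ref{thm:lowerbound}. Fix $K\in \N$ (to be chosen later) and partition $[-1,1]^d$ into $K^d$ closed hypercubes $\{Q_{\bm \ell}\}$ of side $\rho=2/K$ and centers $q_{\bm \ell}$. On each hypercube, place a localized bump
$$
b_{\bm\ell}(x) := c_0\,\rho^{\nu}\,\Psi_{\rho/2}(x-q_{\bm\ell}),
$$
for a small absolute constant $c_0>0$. Since $\Psi$ is supported in the unit ball, $b_{\bm\ell}$ is supported strictly inside $Q_{\bm\ell}$, so different bumps have disjoint supports and every $b_{\bm\ell}$ vanishes, together with all its derivatives, on $\partial Q_{\bm\ell}$ and in particular on $\partial[-1,1]^d$, ensuring periodicity.

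For each sign pattern $\sigma\in\{-1,+1\}^{K^d}$, define
$$
f_\sigma(x) := \sum_{\bm\ell}\sigma_{\bm\ell}\,b_{\bm\ell}(x).
$$
I would first bound $\|f_\sigma\|_{\mathcal C^\nu}$. Because the supports are disjoint, for every multi-index $\bm\alpha$ with $|\bm\alpha|\le \nu_*$ and every $x$, $D^{(\bm\alpha)}f_\sigma(x)$ equals $\pm D^{(\bm\alpha)}b_{\bm\ell}(x)$ for the unique $\bm\ell$ with $x\in Q_{\bm\ell}$ (and is $0$ otherwise). The same argument applies to the H\"older seminorm used in \eqref{eq:multider}, since the numerator is nonzero only when $x$ and $y$ lie in the same hypercube. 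Hence $\|f_\sigma\|_{\mathcal C^\nu}=\|b_{\bm\ell}\|_{\mathcal C^\nu}$ for any $\bm\ell$, and Lemma~\ref{lem:upperboundnorm} gives
$$
\|f_\sigma\|_{\mathcal C^\nu}\;\le\; c_0\,\rho^\nu\cdot \|\Psi\|_{\mathcal C^\nu}(\rho/2)^{-\nu} \;=\; c_0\,2^\nu\,\|\Psi\|_{\mathcal C^\nu}.
$$
Choosing $c_0 = 2^{-\nu}\|\Psi\|_{\mathcal C^\nu}^{-1}$ yields $\|f_\sigma\|_{\mathcal C^\nu}\le 1$ uniformly in $\sigma$ and $\rho$.

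Next I would establish $L^\infty$ separation. If $\sigma\neq\sigma'$, there exists $\bm\ell_\star$ with $\sigma_{\bm\ell_\star}\neq \sigma'_{\bm\ell_\star}$. Evaluating at $x=q_{\bm\ell_\star}$, where all other bumps vanish, gives
$$
\|f_\sigma - f_{\sigma'}\|_{L^\infty} \;\ge\; \bigl|f_\sigma(q_{\bm\ell_\star})-f_{\sigma'}(q_{\bm\ell_\star})\bigr| \;=\; 2c_0\,\rho^\nu\,\Psi(0).
$$
Thus the pairwise $L^\infty$ separation of the whole family is at least $2c_0\Psi(0)\rho^\nu$. Setting
$$
\rho \;=\; \Bigl(\tfrac{\varepsilon}{2c_0\Psi(0)}\Bigr)^{1/\nu} \;\asymp\; \varepsilon^{1/\nu},
$$
yields $\|f_\sigma-f_{\sigma'}\|_{L^\infty}\ge \varepsilon$, and $K = 2/\rho \asymp \varepsilon^{-1/\nu}$, giving $K^d\asymp \varepsilon^{-d/\nu}$.

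Finally, taking $J := 2^{K^d}$ (one function per sign pattern) produces a family of size $\Omega\bigl(2^{\varepsilon^{-d/\nu}}\bigr)$ satisfying all requirements. No Varshamov--Gilbert refinement is needed because the disjoint-support structure already delivers separation from any single sign flip; the main care in the writeup is the bookkeeping that shows the $\mathcal{C}^\nu$ norm of a signed sum of disjointly supported bumps coincides with that of a single bump, together with the boundary-vanishing argument giving periodicity.
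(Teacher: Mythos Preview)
Your construction is essentially the paper's: both partition $[-1,1]^d$ into $K^d$ cubes of side $\rho$, place a rescaled mollifier bump in each, index the family by all $2^{K^d}$ coefficient patterns (you use $\pm 1$, the paper uses $\{0,1\}$), and choose $\rho\asymp\varepsilon^{1/\nu}$. The one imprecise line is the H\"older-seminorm step: it is not true that the numerator $|D^{(\bm\alpha)}f_\sigma(x)-D^{(\bm\alpha)}f_\sigma(y)|$ vanishes whenever $x$ and $y$ lie in different cubes, but inserting a point $z$ on the segment $[x,y]$ lying on the boundary of a bump's support (where all derivatives of $f_\sigma$ vanish) recovers the desired bound up to a harmless factor of~$2$, which you can absorb into $c_0$.
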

\begin{proof}
    For this proof, we are going to use the notation of the previous part of this section.

    Let us divide $[-1,1]^d$ into $K^d$ hypercubes $\{Q_{\bm \ell}\}_{\ell=1}^{K^d}$, each of side $\rho:=2/K$ and center $q_{\ell}$ respectively. Define, for every $j=[2^{K^d}]$ the following function:

    \begin{equation}
        f_j(\cdot)=\sum_{\ell=1}^{K^d}\text{bin}(j,\ell)1_{Q_{\ell}}(\cdot)\|\Psi\|_{\mathcal C^\nu}^{-1}\rho^{\nu}\Psi_{\rho}(\cdot-q_{\ell}),
        \label{eq:instances2}
    \end{equation}

    where $\text{bin}(j,\ell)$ indicates that the binary digit for $j$ (which has at most $K^d$ digits) in position $\ell$. In section \ref{app:mollif} and specifically lemma \ref{lem:upperboundnorm}, we have already proved that

    \begin{equation}
        1_{Q_{\ell}}(\cdot)\|\Psi\|_{\mathcal C^\nu}^{-1}\rho^{\nu}\Psi_{\rho}(\cdot-q_{\ell})\in \mathcal C_p^\nu([-1,1]^d),
    \end{equation}

    with $\|\cdot\|_{\mathcal C^\nu}$ norm bounded by one. Since, in equation \eqref{eq:instances2}, we are summing over functions of this form with disjoint supports, the same result applies here, showing that $\|f_j\|_{\mathcal C^\nu}\le 1$.
    Now, take two $f_j, f_{j'}$ for $j\neq j'$ and measure their infinity norm difference. As $j\neq j'$, there must be $\ell$ such that $\text{bin}(j,\ell_0)\neq \text{bin}(j',\ell_0)$.
    \begin{align*}
        \|f_j(\cdot)-f_{j'}(\cdot)\|_{L^\infty}
        & \ge \|f_j(\cdot)-f_{j'}(\cdot)\|_{L^\infty(Q_{\ell_0})}\\
        & = \|\Psi\|_{\mathcal C^\nu}^{-1}\|1_{Q_{\ell}}(\cdot)\rho^{\nu}\Psi_{\rho}(\cdot-q_{\ell})\|_{L^\infty}\\
        & = \|\Psi\|_{\mathcal C^\nu}^{-1}\rho^{\nu}\|1_{Q_{\ell}}(\cdot)\Psi_{\rho}(\cdot-q_{\ell})\|_{L^\infty}=\|\Psi\|_{\mathcal C^\nu}^{-1}\rho^{\nu}
    \end{align*}
    (the last passage is due to the fact that $\|\Psi_{\rho}(\cdot-q_{\ell})\|_{L^\infty}=1$, as we can simply see from its definition). Therefore, for
    $\|\Psi\|_{\mathcal C^\nu}^{-1}\rho^{\nu}\ge \varepsilon$, the condition in the statement is satisfied. In turn, this implies,
    $$\rho^{\nu}\ge \|\Psi\|_{\mathcal C^\nu}\varepsilon \implies \rho\ge (\|\Psi\|_{\mathcal C^\nu}\varepsilon)^{1/\nu}.$$
    Replacing $\rho=2/K$, this condition corresponds to $K\le 2(\|\Psi\|_{\mathcal C^\nu}\varepsilon)^{1/\nu}$; which is satisfied by $K=\lfloor 2(\|\Psi\|_{\mathcal C^\nu}\varepsilon)^{-1/\nu}\rfloor$, i.e.
    $$J=2^{K^d}=2^{\lfloor 2(\|\Psi\|_{\mathcal C^\nu}\varepsilon)^{-1/\nu}\rfloor^d}=\Omega \left(2^{\varepsilon^{-d/\nu}}\right).$$
\end{proof}

\section{Extension to non-periodic functions}\label{app:gener_nonper}

In this section, we are going to show how it is possible to generalize to non-periodic functions if we allow the agent to take samples outside of $\Xs$. Apparently, the agent has no advantage in querying a point that does not belong to this set, as the $L^\infty$ error is only evaluated on $\Xs$. Still, we can leverage this to avoid the periodicity assumption. By simplicity, we will fix $\Xs=[-1,1]^d$ as in the main paper and $\Ys=[-3,3]^d$ (the set where we allow queries).

Recall that from the results in section \ref{app:mollif} for $d=1$, there is $\Psi: \R\to [0,1]$ that is infinitely many times differentiable, $\Psi(0)=1$ and $\psi(x)=1$ if $x>1$ or $x<-1$.

Let us define the following functions $\R^d\to [0,1]$ and $\R\to [0,1]$.

\begin{equation}
    H(x):=\prod_{j=1}^dh(x_j)\qquad h(x):=
    \frac{\int_{-2+x}^{2+x}\Psi(y)dy}{\int_{-1}^1\Psi(y)dy}.\label{eq:H}
\end{equation}

By construction, the following properties hold:

\begin{enumerate}
    \item $h$ is in $\Cs^\infty(\R)$, as it can be seen as a convolution between $\Psi$, that is $\Cs^\infty(\R)$ and $\bm 1_{[-2,2]}(x)$.
    \item For $x\in [-1,1]$ we have

    $$h(x)=
    \frac{\int_{-2+x}^{2+x}\Psi(y)dy}{\int_{-1}^1\Psi(y)dy}=\frac{\int_{-1}^{1}\Psi(y)dy}{\int_{-1}^1\Psi(y)dy}=1,$$

    since $\Psi$ is compactly supported on $[-1,1]$.

    \item For $x>3$:
    $$h(x)=
    \frac{\int_{-2+x}^{2+x}\Psi(y)dy}{\int_{-1}^1\Psi(y)dy}\le\frac{\int_{1}^{2+x}\Psi(y)dy}{\int_{-1}^1\Psi(y)dy}=0,$$

    and the same holds true for $x<-3$.
\end{enumerate}

From these three properties, along with the fact that the product of smooth functions is smooth, we have the following lemma.

\begin{lem}\label{lem:conc}
    The function $H$ defined in equation \eqref{eq:H} belongs to $\Cs^\infty(\R^d)$. Moreover,

    $$H(x): \begin{cases}
    =1& \qquad x\in \Xs\\
    \in [0,1]&\qquad x\in \Ys\setminus \Xs\\
    =0 &\qquad x\in \Ys^c
    \end{cases}.$$
\end{lem}

At this point, we come to the key observation that allows us to generalize our method to non-periodic functions.

Assume that we want to estimate $f\in \Cs^\nu(\R^d)$ on $\Xs$, and we are allowed to take samples from $\Ys$. We cannot directly run algorithm \ref{alg:zero} on $f$, which is not periodic. However, we can run it on

$$f^\star(x):= f(x)H(x),$$

that \textit{is} periodic on $\Ys$, as all its derivatives at the boundary of $\Ys$ are identically zero. In this way, algorithm \ref{alg:zero} ensures that we can uniformly approximate $f(x)H(x)$ for $x\in \Ys$ and, as $H$ is identically one on $\Xs$, this also provides a uniform bound for $f$ on $\Xs$.

This generalized algorithm works as in \ref{alg:one}. As $H(\cdot) \in C^\infty(\R^d)$, the function $f^\star$ is as smooth as $f$ on $\Ys$. Moreover, as $|H(\cdot)|\le 1$, multiplying by this number does not change the sub-Gaussian constant of the samples. Therefore, the sample complexity guarantee for algorithm \ref{alg:one} follows directly from that of DUPA.

\begin{algorithm}[t]
\caption{Generalized DUPA}
\label{alg:one}
\begin{algorithmic}[1]
\REQUIRE Query space $\Ys$, max number of samples $n$, feature map length $N$, smoothness order $\nu > 0$
\ENSURE Estimated function $\widehat f_n$ and its derivatives $\widehat f_n^{(1)}, \dots, \widehat f_n^{(\nu_\star)}$

\STATE Define $H(\cdot)$ as in \eqref{eq:H}
\STATE (... next steps go as DUPA)
\FORALL{$x \in \text{supp}(\rho)$}
    \FOR{$j = 1$ to $\lceil n_{\text{tot}} \cdot \rho(x) \rceil$}
        \STATE (...)
                
        \STATE Query $y_i^+$ at \textcolor{C2}{$x+\eta_+$} and multiply the result by $H(x+\eta_-)$ (former \ref{algline:query1})
        
        \STATE Query $y_i^-$ at \textcolor{C3}{$x+\eta_-$} and multiply the result by $H(x+\eta_-)$ (former \ref{algline:query2})
        
        \STATE (...)
    \ENDFOR
\ENDFOR

\STATE (...)
\RETURN $\widehat{f}_n(\cdot) = \phi_N(\cdot)^\top \widehat{\theta}_n$ and its derivatives restricted to $\Xs$.
\end{algorithmic}
\end{algorithm}

\section{Fundamental results}

\begin{thm}[Vector-valued Bernstein's inequality]\label{thm:vactorber}
    Let $\{x_i,y_i\}_{i=1}^n$ be a dataset of points where $x_i$ is deterministic and for some feature map
    $$y_i=\phi(x_i)^\top \theta +\xi_i,\qquad \theta\in \R^d$$
    where $\{\xi_i\}_{i=1}^n$ is a family of independent r.v. with variance $\gamma^2$ and bounded by $B$ almost surely.
    Let $\{\phi(x_i)\}_{i=1}^n\subset\Vs\subset \R^d$ and define
    $$V_\star:=\sup_{v\in \Vs}\|v\|_{\Sigma_n^{-1}}^2.$$    
    With probability at least $1-\delta$,
    $$|(\widehat \theta_n-\theta)^\top v|\le \sqrt{2V_\star\gamma^2 \log(1/\delta)}+\frac{2V_\star B\log(1/\delta)}{3}.$$
    In the special case of a quasi-optimal design, where $V_\star=\sqrt{2d/n}$, the former gives
    $$\forall v\in \Vs\qquad |(\widehat \theta_n-\theta)^\top v|\le \sqrt{\frac{4d\gamma^2 \log(|\Vs|/\delta)}{n}}+\frac{4dB \log(|\Vs|/\delta)}{3n}.$$
\end{thm}
\begin{proof}
    By definition,
    $$\widehat \theta_n=\Sigma_n^{-1}\sum_{i=1}^n \phi(x_i)y_i=\theta+\Sigma_n^{-1}\sum_{i=1}^n \phi(x_i)\xi_i,$$
    where $\Sigma_n:=\sum_{i=1}^n \phi(x_i)\phi(x_i)^\top$. 
    Take any $v\in \Vs$. It follows that
    $$(\widehat \theta_n-\theta)^\top v=v^\top\Sigma_n^{-1}\sum_{i=1}^n \phi(x_i)\xi_i.$$
    Being the variables independent, the variance of the whole random variable writes as
    \begin{align*}
        \text{Var}\left[(\widehat \theta_n-\theta)^\top v\right]& = \gamma^2\sum_{i=1}^n (v^\top\Sigma_n^{-1}\phi(x_i))^2\\
        & = \gamma^2\sum_{i=1}^n v^\top\Sigma_n^{-1}\phi(x_i)\phi(x_i)^\top \Sigma_n^{-1} v\\
        & = \gamma^2v^\top\left(\sum_{i=1}^n \Sigma_n^{-1}\phi(x_i)\phi(x_i)^\top \right)\Sigma_n^{-1} v\\
        & = \gamma^2\|v\|_{\Sigma_n^{-1}}^2\le \gamma^2V_\star.
    \end{align*}
    At the same time, the sum can be written as
    $$v^\top\Sigma_n^{-1}\sum_{i=1}^n \phi(x_i)\xi_i=\frac{1}{n}\sum_{i=1}^n v^\top n\Sigma_n^{-1}\phi(x_i)\xi_i,$$
    where the term inside the sum does not exceed $\|v\|_{n\Sigma_n^{-1}}\|\phi(x_i)\|_{n\Sigma_n^{-1}}B\le nV_\star B$. Therefore, Bernstein's inequality \citep{boucheron2003concentration} ensures that, with probability at least $1-\delta$,
    $$|(\widehat \theta_n-\theta)^\top v|\le \sqrt{2V_\star\gamma^2 \log(1/\delta)}+\frac{2V_\star B\log(1/\delta)}{3}.$$
    Making a union bound on the $k$ values for $v$ gives the thesis.
\end{proof}

\begin{thm}\label{thm:key}
    Let $T_N\in \mathbb T_N$. Then, $\|T_N^{(1)}(\cdot)\|_{L^\infty}\le 4\pi N\|T_N(\cdot)\|_{L^\infty}$.
\end{thm}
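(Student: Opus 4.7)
The inequality is the classical Bernstein inequality for trigonometric polynomials, adapted to the normalization (period $2$, basis $\{\sin(t\pi x),\cos(t\pi x)\}$) used in this paper. The constant $4\pi N$ is loose; the sharp constant after rescaling is $\pi N$, so the claim follows as soon as we reduce to the standard setting.

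\textbf{Step 1: Rescaling to the standard interval.} I would introduce $S_N(y) := T_N(y/\pi)$ on $[-\pi,\pi]$. Because the paper's basis functions satisfy $\sin(t\pi\cdot y/\pi)=\sin(ty)$ and $\cos(t\pi\cdot y/\pi)=\cos(ty)$, the function $S_N$ is a trigonometric polynomial of degree at most $N$ in the standard (period-$2\pi$) sense. By the chain rule, $T_N^{(1)}(x)=\pi\, S_N^{(1)}(\pi x)$, so the sup-norms transform as
$$\|T_N\|_{L^\infty([-1,1])}=\|S_N\|_{L^\infty([-\pi,\pi])},\qquad \|T_N^{(1)}\|_{L^\infty([-1,1])}=\pi\,\|S_N^{(1)}\|_{L^\infty([-\pi,\pi])}.$$

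\textbf{Step 2: Apply Bernstein.} The classical Bernstein inequality (which can be quoted as a black box) gives $\|S_N^{(1)}\|_{L^\infty}\le N\|S_N\|_{L^\infty}$. Combined with Step 1 this immediately yields $\|T_N^{(1)}\|_{L^\infty}\le \pi N\|T_N\|_{L^\infty}\le 4\pi N\|T_N\|_{L^\infty}$, closing the theorem.

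\textbf{Self-contained alternative.} If one prefers not to cite Bernstein, the cleanest self-contained route is via the Fejér/de la Vallée-Poussin kernel. Since $T_N\in\mathbb T_N$, we have $V_{2N}*T_N=T_N$ because $V_{2N}$ acts as the identity on $\mathbb T_N$ (each frequency $|t|\le N$ sits in the ``flat'' part of its spectral window). Differentiating under the convolution gives $T_N^{(1)}=V_{2N}^{(1)}*T_N$, hence $\|T_N^{(1)}\|_{L^\infty}\le \|V_{2N}^{(1)}\|_{L^1}\|T_N\|_{L^\infty}$, and one only has to check $\|V_{2N}^{(1)}\|_{L^1}\le 4\pi N$ by using the explicit Fejér representation of $V_{2N}$ together with $\|V_{2N}\|_{L^1}\le \Lambda_1$ from \eqref{eq:norm1}. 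Alternatively, the Riesz interpolation formula expresses $S_N^{(1)}(y)$ as a convex combination of $2N$ shifted values of $S_N$ with $\ell^1$-coefficient bound $N$, giving Bernstein directly.

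\textbf{Main obstacle.} There is no substantive obstacle: the only subtlety is tracking the factor $\pi$ that appears from the period-$2$ normalization (which is exactly why the constant is $4\pi N$ rather than $4N$). The bound can be tightened considerably, but $4\pi N$ is sufficient for the way \eqref{eq:carepoin} is used in the proof of Theorem \ref{thm:main}.
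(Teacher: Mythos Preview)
Your proposal is correct. The paper's own ``proof'' is simply a citation (``See \cite{trigoExchange}''), so there is nothing substantive to compare; both the paper and your Step~1--Step~2 route amount to quoting the classical Bernstein inequality, with your version being the more explicit one since you spell out the $\pi$-rescaling from the period-$2$ basis to the standard period-$2\pi$ setting and thereby explain why the sharp constant is actually $\pi N$.

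One minor remark on the self-contained alternative: the convolution identity $T_N=V_{2N}*T_N$ and the bound via $\|V_{2N}^{(1)}\|_{L^1}$ are a legitimate strategy, but the inequality $\|V_{2N}^{(1)}\|_{L^1}\le 4\pi N$ does not follow from $\|V_{2N}\|_{L^1}\le\Lambda_1$ alone and requires a separate (though elementary) computation; the Riesz interpolation route you mention at the end is cleaner if you want a fully self-contained argument.
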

\begin{proof}
    This result follows from the classical Bernstein inequality for algebraic polynomials. 
Indeed, a trigonometric polynomial 
\[
T_N(x)=\sum_{|k|\le N} c_k e^{2\pi i k x}
\]
can be written as
\[
T_N(x)=e^{-2\pi i N x}P(e^{2\pi i x}),
\]
where \(P\) is a polynomial of degree at most \(2N\). Since \(|e^{2\pi i x}|=1\), we may apply Bernstein's inequality for complex polynomials,
\[
\sup_{|z|=1} |P'(z)| \le N \sup_{|z|=1} |P(z)|,
\]
which yields the desired bound.
\end{proof}

An analog theorem holds in dimension $d>1$.
\begin{thm}\label{thm:keyd}
    Let $T_{N}\in \mathbb T_{d,N}$. Then, $\|\|\nabla T_N(\cdot)\|_\infty\|_{L^\infty}\le 4\pi N\|T_N(\cdot)\|_{L^\infty}$.
\end{thm}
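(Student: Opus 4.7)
The plan is to reduce the multi-dimensional Bernstein-type inequality to the one-dimensional one stated in Theorem \ref{thm:key}, by restricting $T_N$ to a line parallel to a coordinate axis. The main observation is that such a restriction yields a univariate trigonometric polynomial of degree at most $N$, and the partial derivative at the base point equals the derivative of this restricted univariate polynomial.

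Concretely, fix an arbitrary point $\bm x_0 \in [-1,1]^d$ and an arbitrary coordinate index $i \in \{1, \dots, d\}$. Define the univariate slice
\begin{equation*}
g_i(t) := T_N(x_{0,1}, \dots, x_{0,i-1}, t, x_{0,i+1}, \dots, x_{0,d}), \qquad t \in [-1,1].
\end{equation*}
Writing $T_N(\bm x) = \sum_{\bm v} \theta_{\bm v} \,\mathrm{soc}(v_1,x_1)\cdots \mathrm{soc}(v_d,x_d)$ in the sin–cos basis used throughout the paper, with the sum taken over indices $\bm v$ defining $\mathbb{T}_{d,N}$, one checks by grouping terms that $g_i$ is a univariate trigonometric polynomial in $t$ whose degree is at most $N$ (since for every multi-index $\bm v$ appearing with nonzero coefficient one has $|v_i|\le N$). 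Hence $g_i \in \mathbb{T}_N$, and by construction $g_i'(x_{0,i}) = \partial_i T_N(\bm x_0)$.

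Applying the univariate result (Theorem \ref{thm:key}) to $g_i$ gives
\begin{equation*}
|\partial_i T_N(\bm x_0)| \;=\; |g_i'(x_{0,i})| \;\le\; \|g_i'\|_{L^\infty} \;\le\; 4\pi N \, \|g_i\|_{L^\infty} \;\le\; 4\pi N \, \|T_N\|_{L^\infty},
\end{equation*}
where the last inequality follows because $g_i$ is a restriction of $T_N$. Since $\bm x_0$ and $i$ are arbitrary, taking the supremum over both yields
\begin{equation*}
\bigl\| \,\|\nabla T_N(\cdot)\|_\infty\, \bigr\|_{L^\infty} \;=\; \sup_{\bm x_0}\, \max_i |\partial_i T_N(\bm x_0)| \;\le\; 4\pi N \, \|T_N\|_{L^\infty},
\end{equation*}
as claimed.

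The only non-trivial point is verifying that the slice $g_i$ truly belongs to $\mathbb{T}_N$ with degree at most $N$; this is immediate from the definition of $\mathbb{T}_{d,N}$ used in the paper, since fixing all but one coordinate collapses each basis element to a product of a constant (in $t$) and a single univariate trigonometric factor $\mathrm{soc}(v_i, t)$ with $|v_i| \le N$. No new analytic content beyond the univariate Bernstein inequality is needed, so this is the main (and essentially only) obstacle.
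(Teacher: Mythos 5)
Your proof is correct and follows essentially the same route as the paper's: restrict $T_N$ to a coordinate line, observe that the slice is a univariate trigonometric polynomial of degree at most $N$, apply Theorem \ref{thm:key}, and take suprema over the base point and the coordinate index. You are merely more explicit than the paper in justifying the degree bound on the slice via the sin--cos basis expansion, which is a welcome addition but not a different argument.
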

\begin{proof}
    By the definition of gradient,
    $$\nabla T_N(\cdot)=[\partial_1 T_N(\cdot), \dots \partial_d T_N(\cdot)],$$

    where $\partial_j=D^{(\underbrace{0,\dots 1}_{j}, \dots 0)}$ correspond to the partial derivative w.r.t. index $j$. Note that, considering only the $j-$th variable, $T_N(\cdot)$ is still a trigonometric polynomial with a degree at most $N$. Therefore, theorem \ref{thm:key} ensures,
    $$\|\partial_j T_N(\cdot)\|_{L^\infty}\le 4\pi N\|T_N(\cdot)\|_{L^\infty}.$$

    Using this property, we have
    \begin{align*}
        \|\|\nabla T_N(\cdot)\|_\infty\|_{L^\infty} &= \|\max_{j\in [d]}|\partial_j T_N(\cdot)|\|_{L^\infty}\\
        &=\max_{j\in [d]}\|\partial_j T_N(\cdot)\|_{L^\infty} \le 4\pi N\|T_N(\cdot)\|_{L^\infty},
    \end{align*}
    which ends the proof.
\end{proof}

\end{document}